\newcommand{\R}{\mathbf{R}}
\newcommand{\E}{\mathbf{E}}
\newcommand{\Var}{\mathbf{Var}}
\newcommand{\indicator}{\mathbf{1}}
\renewcommand{\mod}{\textrm{ mod }}
\newtheorem{definition}{Definition}
\newtheorem{lemma}{Lemma}
\newtheorem{corollary}{Corollary}
\newcommand\FULL{}
\renewcommand{\epsilon}{\varepsilon}
\newcommand{\splitter}[2]{\ifdefined\FULL#1\else#2\fi}
  \newcommand{\supmat}[1]{Appendix{#1}}
  \newcommand{\Supmat}[1]{Appendix{#1}}
  \newcommand{\supmatopt}[1]{#1}
  \newcommand{\supmat}[1]{the supplementary material}
  \newcommand{\Supmat}[1]{The supplementary material}
  \newcommand{\supmatopt}[1]{}
\newcommand{\D}{\mathcal{D}}
\newcommand{\smoothset}{\binom{\mathcal{Y}}{2m}_{\gamma\textrm{-smooth}}}
\title{Scalable and Differentially Private Distributed Aggregation
in the Shuffled Model\\ }
\author{
  Badih Ghazi\\
  Google Research\\
  \texttt{badihghazi@gmail.com} \\
  \And
  Rasmus Pagh \\
  Google Research\\ 
  \& IT University of Copenhagen\\ 
  \texttt{pagh@itu.dk}
  \And
  Ameya Velingker \\
  Google Research \\
  \texttt{ameyav@google.com}
}
\begin{document}

\maketitle

\begin{abstract}
  
  \emph{Federated learning} promises to make machine learning feasible on distributed, private datasets by implementing gradient descent using secure aggregation methods.
  The idea is to compute a global weight update without revealing the contributions of individual users.

  Current practical protocols for secure aggregation work in an ``honest but curious'' setting where a curious adversary observing all communication to and from the server cannot learn any private information assuming the server is honest and follows the protocol.
  
  A more scalable and robust primitive for privacy-preserving protocols is \emph{shuffling} of user data, so as to hide the origin of each data item.
  Highly scalable and secure protocols for shuffling, so-called \emph{mixnets}, have been proposed as a primitive for privacy-preserving analytics in the Encode-Shuffle-Analyze framework by Bittau et al., which was later analytically studied by Erlingsson et al. and Cheu et al.. The recent papers by Cheu et al., ~and Balle et al.~have given protocols for secure aggregation that achieve \emph{differential privacy} guarantees in this ``shuffled model''.
  Their protocols come at a cost, though:
  Either the expected aggregation error \emph{or} the amount of communication per user scales as a polynomial $n^{\Omega(1)}$ in the number of users $n$.

  In this paper we propose simple and more efficient protocol for aggregation in the shuffled model, where communication as well as error increases only polylogarithmically in $n$.
  Our new technique is a conceptual ``invisibility cloak'' that makes users' data almost indistinguishable from random noise while introducing \emph{zero} distortion on the sum.
\end{abstract}

\section{Introduction}

We consider the problem of privately summing $n$ numbers in the \emph{shuffled model} which is based on the Encode-Shuffle-Analyze architecture of Bittau et al.~\cite{bittau17} and was first analytically studied in \cite{erlingsson2019amplification, cheu19}. For consistency with the literature we will use the term \emph{aggregation} for the sum operation.
Consider $n$ users with data $x_1,\dots,x_n \in [0,1]$.
In the shuffled model user $i$ applies a randomized \emph{encoder} algorithm $E$ that maps $x_i$ to a multiset of $m$ messages, $E(x_i) = \{y_{i,1},\dots,y_{i,m}\} \subseteq \mathcal{Y}$, where $m$ is a parameter.
Then a trusted \emph{shuffler} $\mathcal{S}$ takes all $nm$ messages and outputs them in random order.
Finally, an \emph{analyzer} algorithm $\mathcal{A}$ maps the shuffled output $\mathcal{S}(E(x_1),\dots,E(x_n))$ to an estimate of~$\sum_i x_i$.

A protocol in the shuffled model is $(\varepsilon,\delta)$-differentially private if $\mathcal{S}(R_1(x_1),\dots,R_n(x_n))$ is $(\varepsilon,\delta)$-differentially private (see definition in Section~\ref{sec:preliminaries}), where probabilities are with respect to the random choices made in the algorithm $E$ and the shuffler $\mathcal{S}$.
The privacy claim is justified by the existence of highly scalable protocols for privately implementing the shuffling primitive~\cite{bittau17,cheu19}.

Two protocols for aggregation in the shuffled model were recently suggested by Balle et al.~\cite{balle19} and Cheu et al.~\cite{cheu19}.
We discuss these further in Section~\ref{sec:related}, but note here that all previously known protocols have either communication or error that grows as $n^{\Omega(1)}$.
This is unavoidable for single-message protocols, by the lower bound in~\cite{balle19}, but it has been unclear if such a trade-off is necessary in general.
Cheu et al.~\cite{cheu19} explicitly mention it as an open problem to investigate this question.

\subsection{Our Results}

We show that a trade-off is not necessary --- it is possible to avoid the $n^{\Omega(1)}$ factor in both the error bound and the amount of communication per user.
The precise results obtained depend on the notion of ``neighboring dataset'' in the definition of differential privacy.
We consider the standard notion of neighboring dataset in differential privacy, that the input of a \emph{single user} is changed, and show:
\begin{restatable}{theorem}{mainsingledp}\label{theorem:main_classical_DP}
Let $\epsilon > 0$ and $\delta \in (0,1)$ be any real numbers. There exists a protocol in the shuffled model that is $(\varepsilon,\delta)$-differentially private under single-user changes, has
expected error $O(\frac{1}{\varepsilon} \sqrt{\log\frac{1}{\delta}})$, and where each encoder
sends $O(\log(\frac{n}{\epsilon \delta}))$ messages
of $O(\log(\frac{n}{\delta}))$ bits.
\end{restatable}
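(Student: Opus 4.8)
The plan is to realize the ``invisibility cloak'' as additive secret sharing over a finite field: each user masks a discretized, slightly-noised copy of its input by splitting it into $m$ uniformly random field elements whose sum is the masked value. Summing all $nm$ received elements then recovers the masked total \emph{exactly} --- the masking contributes no distortion --- while the shuffled multiset of the individual messages is, marginally, just uniform noise, which is what will give privacy.

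\textbf{Protocol.} Fix a prime $q$ and an integer precision $\Delta$, both bounded by a polynomial in $n$ (e.g.\ $\Delta := n$ and $q$ the least prime exceeding $2n(\Delta+r)$, with $r$ as below). User $i$ (i) randomized-rounds $\Delta x_i$ to $\bar x_i\in\{0,\dots,\Delta\}$ with $\E[\bar x_i]=\Delta x_i$; (ii) draws noise $\eta_i$ from a fixed distribution on $\{0,\dots,r-1\}$ (say uniform) with $r=\Theta\!\big(\tfrac{\Delta}{\varepsilon\sqrt n}\sqrt{\log(1/\delta)}\big)$ and sets $w_i:=\bar x_i+\eta_i$; (iii) samples $y_{i,1},\dots,y_{i,m-1}$ i.i.d.\ uniform in $\mathbb{Z}_q$ and sets $y_{i,m}:=w_i-\sum_{j<m}y_{i,j}\bmod q$; it sends $\{y_{i,1},\dots,y_{i,m}\}$. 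The analyzer outputs $\tfrac1\Delta\big((\sum_{i,j}y_{i,j}\bmod q)-n\,\E[\eta_1]\big)$, where $m=\Theta(\log\tfrac{n}{\varepsilon\delta})$.

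\textbf{Correctness, error, communication.} Since $\sum_i w_i\le n(\Delta+r)<q$ there is no wraparound, so $\sum_{i,j}y_{i,j}\bmod q=\sum_i\bar x_i+\sum_i\eta_i$ exactly; hence the estimate equals $\sum_i x_i+\tfrac1\Delta\sum_i(\bar x_i-\Delta x_i)+\tfrac1\Delta(\sum_i\eta_i-n\E[\eta_1])$. Both correction terms are sums of $n$ independent, mean-zero variables bounded by $O(1)$ and $O(r)$ respectively, so they have standard deviation $O(\sqrt n)$ and $O(r\sqrt n)$, and the expected error is $O(\sqrt n/\Delta+r\sqrt n/\Delta)=O(\tfrac1\varepsilon\sqrt{\log\tfrac1\delta})$ by the choice of $r,\Delta$. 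Each user sends $m=O(\log\tfrac{n}{\varepsilon\delta})$ messages of $\lceil\log_2 q\rceil=O(\log\tfrac n\delta)$ bits.

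\textbf{Privacy (the crux).} One must show the shuffled multiset $T$ of all $nm$ messages is $(\varepsilon,\delta)$-DP under a single-user change $X\to X'$ at user $i_0$. Write $T=B\uplus A$, where $B$ is the multiset of the $(n-1)m$ messages of the users $i\ne i_0$ --- identically distributed under $X$ and $X'$ --- and $A$ (resp.\ $A'$) is the bundle from $i_0$, uniform on $\{\mathbf y\in\mathbb{Z}_q^m:\sum_j y_j=w_{i_0}\}$ (resp.\ $w'_{i_0}$), with $|w_{i_0}-w'_{i_0}|\le\Delta+r+1$. For a target multiset $C$ of size $nm$ we have $\Pr[B\uplus A=C]=\sum_{A_0\subseteq C,\,|A_0|=m,\,\Sigma A_0=w_{i_0}}\Pr[A=A_0]\,\Pr[B=C\setminus A_0]$ and similarly with $w'_{i_0}$; since $\Pr[A=A_0]$ is the same nonzero constant for all admissible $A_0$, the likelihood ratio is the ratio of two sums of the weights $\Pr[B=C\setminus A_0]$ over size-$m$ sub-multisets of $C$ with sum $w_{i_0}$, resp.\ $w'_{i_0}$. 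The plan is: (a) isolate a ``$\gamma$-smooth'' event --- in the notation $\smoothset$, a condition on the at most $2m$ message values that can be perturbed, roughly that no residue of $\mathbb{Z}_q$ occurs with large multiplicity and that for every target $s$ the $\Pr[B=\cdot]$-weighted count of size-$m$ sub-multisets of $C$ summing to $s$ is within a $(1\pm\gamma)$ factor of its average over $s$ --- and show it fails with probability at most $\delta$, using tail bounds for the counts of size-$m$ sub-multiset sums across the $q$ residues; (b) on this event conclude the likelihood ratio lies in $[e^{-\varepsilon},e^{\varepsilon}]$; (c) check that $m=\Theta(\log\tfrac{n}{\varepsilon\delta})$ and $q=\mathrm{poly}(n/\delta)$ force $\gamma=O(\varepsilon)$ and failure probability $\le\delta$. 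The main obstacle is precisely (a)--(b): choosing the right smoothness notion and proving the $\ell_\infty$-type equidistribution of size-$m$ sub-multiset sums over $\mathbb{Z}_q$ --- and its stability under deleting $m$ elements --- with only $m=O(\log(\cdot))$ messages to work with; the discretization/noise bookkeeping and the reduction to $B\uplus A$ versus $B\uplus A'$ are routine by comparison.
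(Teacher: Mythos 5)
Your protocol structure (additive secret sharing over $\mathbb{Z}_q$ plus a small amount of pre-noise) matches the paper's, but the privacy argument you sketch is genuinely different in route and, more importantly, is left with a gap exactly where the paper's proof does real work.

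The paper does \emph{not} analyze the likelihood ratio of the full $nm$-element shuffled multiset directly, as you propose. It first proves a reduction (Lemma~\ref{lemma:swap_dp}) showing that it suffices to establish $(\epsilon,\delta)$-indistinguishability for the \emph{two-encoder} output $E(x_{j_1},x_{j_2})$, by conditioning on the bundles $E(x_i)$ of the other $n-2$ users (which are identically distributed under $X$ and $X'$). The ``$\gamma$-smoothness'' condition the paper needs (Definition~\ref{def:gamma_smooth}, Lemma~\ref{lemma:gamma_smooth}) is then a statement about a multiset of only $2m$ elements, $2m-2$ of which are i.i.d.\ uniform; this is what makes the Chebyshev/union-bound concentration work with $m=\Theta(\log N)$. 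Your plan instead asks for an $\ell_\infty$-equidistribution statement about all $\binom{nm}{m}$ size-$m$ sub-multisets of the full output $C$, with weights $\Pr[B=C\setminus A_0]$ that involve the joint distribution of $n-1$ encoders and their noise. You correctly flag this as ``the main obstacle,'' but that obstacle \emph{is} the theorem: nothing in the sketch indicates how to prove it, and it is a far heavier statement than Lemma~\ref{lemma:gamma_smooth}. It is not a routine strengthening of the paper's smoothness lemma.

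A second, independent gap is the choice of noise. You put uniform noise on $\{0,\dots,r-1\}$ on every user with $r=\Theta\!\big(\tfrac{\Delta}{\varepsilon\sqrt n}\sqrt{\log(1/\delta)}\big)\ll\Delta$. In the paper's two-player argument (Lemma~\ref{lemma:classical_pointwise_DP}), after the factorization of Lemma~\ref{lemma:two_player_set_probability}, the per-output likelihood ratio contains a factor $\D_{N,p}\big[(\Sigma y^*-\lfloor x_1k\rfloor-\lfloor x_2k\rfloor)\bmod I\big]\big/\D_{N,p}\big[(\Sigma y^*-\lfloor x'_1k\rfloor-\lfloor x_2k\rfloor)\bmod I\big]$: the \emph{other} player's noise has to absorb a shift of magnitude up to $k$, and the only property that controls this ratio is the log-Lipschitzness of the noise p.m.f.\ over the whole mod-$N$ circle (Lemma~\ref{lemma:noise_distribution_ratio}). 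Uniform noise of range $r<\Delta$ has no such Lipschitz bound — the ratio is $0/\text{positive}$ or $\text{positive}/0$ for many outputs — so if you ever tried to recover the two-player reduction from your setup, it would fail outright. The paper avoids this by using a truncated discrete Laplace (geometrically decaying, hence log-Lipschitz with constant $-\log p$) added with Bernoulli probability $q\approx\log(1/\delta)/n$; the Bernoulli masking is what keeps the \emph{expected} error at $O(\tfrac1\varepsilon\sqrt{\log(1/\delta)})$ while each noise draw is individually heavy enough ($\Theta(k/\varepsilon)$ in scale) to Lipschitz-cover a $\pm k$ shift. You would need to either (i) switch to a heavy-tailed, log-Lipschitz noise per user and then carry out the two-player reduction, or (ii) actually prove your global weighted-equidistribution lemma for $C$, neither of which the proposal does. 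As written, the privacy claim is unsupported.

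Minor points: your error and communication accounting and the no-wraparound argument are fine and match the paper's; randomized rounding vs.\ floor rounding is an immaterial difference; and the decomposition $\Pr[T=C]=\sum_{A_0}\Pr[A=A_0]\Pr[B=C\setminus A_0]$ is correct in principle (cf.\ Lemma~\ref{lemma:two_player_set_probability}), but it is a restatement of the problem rather than progress on it.
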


We also consider a different notion similar to the gold standard of secure multi-party computation:
Two datasets are considered neighboring if the their \emph{sums} (taken after discretization) are identical.
This notion turns out to allow much better privacy, even with \emph{zero} noise in the final sum --- the only error in the protocol comes from representing the terms of the sum in bounded precision.
\begin{restatable}{theorem}{mainsumdp}\label{theorem:main_sum_DP}
Let $\epsilon > 0$ and $\delta \in (0,1)$ be any real numbers and let $m > 10 \log{(\frac{n}{\epsilon \delta})}$. There exists a protocol in the shuffled model that is $(\varepsilon,\delta)$-differentially private under sum-preserving changes, has
worst-case error $2^{-m}$, and where each encoder sends $m$ messages of $O(m)$ bits.
\end{restatable}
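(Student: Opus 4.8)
The protocol is additive secret sharing followed by the shuffle. Fix a modulus $M$ that is a power of two with $\log_2 M = m + O(\log n) = O(m)$ (using $m>10\log(n/(\epsilon\delta))$) and $M > 2^m n^2$. On input $x_i\in[0,1]$ the encoder computes the discretization $\bar x_i=\lfloor 2^m n\, x_i\rfloor\in\{0,\dots,2^m n\}$, samples $y_{i,1},\dots,y_{i,m-1}$ uniformly and independently from $\Z_M$, sets $y_{i,m}=\bar x_i-\sum_{j<m}y_{i,j}\bmod M$, and outputs the multiset $\{y_{i,1},\dots,y_{i,m}\}$; thus a user's $m$ messages are uniform conditioned on summing to $\bar x_i$. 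The analyzer sums all $nm$ received messages modulo $M$ and divides by $2^m n$. Since $\sum_i\bar x_i\le 2^m n^2<M$ nothing wraps around, so the analyzer returns exactly $\tfrac1{2^mn}\sum_i\bar x_i$, which is within $n\cdot\tfrac1{2^mn}=2^{-m}$ of $\sum_i x_i$ --- a deterministic, worst-case bound. Each encoder sends $m$ messages of $O(m)$ bits, as claimed.

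\textbf{Privacy: reduction to the sum.}
The shuffler outputs the multiset $\mu$ of all $nm$ shares. For a dataset with $\sum_i\bar x_i=T$, let $\mathcal I_T$ denote the ``ideal'' distribution: the multiset of $nm$ uniform elements of $\Z_M$ conditioned on their sum being $T$ (equivalently, a fresh additive secret sharing of $T$). Since neighboring (sum-preserving) datasets $D,D'$ have the same $T$ and $\mathcal I_T$ depends only on $T$, it suffices to prove that the protocol's output on an arbitrary dataset is $(\epsilon/2,\delta/2)$-close to $\mathcal I_T$ in the approximate-indistinguishability sense, and then combine the bounds for $D$ and $D'$ with the triangle inequality. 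I would prove this by induction on the number of users: merging one further user's $m$ fresh shares into a bundle already close to $\mathcal I_{T_k}$ reduces, by independence and the fact that $(\epsilon,\delta)$-indistinguishability is preserved under post-processing, to a \emph{two-group lemma}: the shuffle of $p$ uniform shares summing to $a$ together with $m$ uniform shares summing to $b$ is $(\epsilon',\delta')$-close to $p+m$ fresh shares of $a+b$, with $\epsilon'=O(\epsilon/n)$ and $\delta'=O(\delta/n)$. Applying the lemma along the $n-1$ merge steps gives the claim.

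\textbf{The two-group lemma.}
Writing out the probability mass function of the merged multiset $\mu$ shows that its likelihood relative to $\mathcal I_{a+b}$ is exactly $M\cdot\Pr[\,\text{a uniformly random }m\text{-subset of }\mu\text{ sums to }b\bmod M\,]$; the remaining combinatorial factors (numbers of orderings of $\mu$) are identical for the two distributions. Call $\mu$ \emph{$\gamma$-smooth} if it is spread out over $\Z_M$ --- concretely, if every nonzero Fourier mode of its empirical distribution has magnitude at most $\gamma$. Two facts then suffice: (a) for every $\gamma$-smooth $\mu$ and every target $b$, the sum of a uniformly random $m$-subset of $\mu$ is within a multiplicative $e^{\pm\epsilon'}$ of uniform on $\Z_M$; and (b) a bundle of fresh shares fails to be $\gamma$-smooth with probability at most $\delta'$. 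Part (b) is a routine concentration bound, since the shares are close to i.i.d.\ uniform; (a) makes the likelihood ratio lie in $[e^{-\epsilon'},e^{\epsilon'}]$ on all but a $\delta'$-probability event, which is the two-group lemma and hence (via the reduction above) Theorem~\ref{theorem:main_sum_DP}. The hypothesis $m>10\log(n/(\epsilon\delta))$ is exactly what makes the exponentially-small-in-$m$ error of (a) and the failure probability of (b) small enough after the union over the $n-1$ merge steps (and, inside (a), over the $M$ Fourier modes).

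\textbf{Main obstacle.}
The heart of the argument is part (a): showing that sampling $m$ elements \emph{without replacement} from a spread-out multiset gives a sum that is near-uniform modulo $M$, with relative error $2^{-\Omega(m)}\cdot\mathrm{poly}(n)$. The natural route is Fourier-analytic --- for each nonzero mode $\xi$, bound the $\xi$-th Fourier coefficient of the random-subset-sum distribution in terms of the smoothness of $\mu$, then sum over $\xi$, using cancellation to beat the trivial union bound over all $M$ modes --- while carefully accounting for the without-replacement dependence. Making these estimates sharp enough that the requirement lands at $m>10\log(n/(\epsilon\delta))$ rather than something weaker is where the real work lies; the protocol, its correctness, and the inductive packaging are all routine.
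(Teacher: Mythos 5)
Your protocol and its correctness analysis are right (and you even choose the discretization scale $k=2^m n$ and modulus $M>2^mn^2$ more carefully than the paper's proof of this theorem, which sets $k=10n$ and would therefore give $O(1)$ rounding error rather than the claimed $2^{-m}$). The overall architecture of your privacy argument, however, differs from the paper's and has a genuine unfilled hole at its center.

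The paper never introduces the ideal distribution $\mathcal{I}_T$. Instead it observes that any two sum-preserving datasets differ by a chain of at most $n-1$ sum-preserving \emph{pairwise swaps}; it proves a two-player lemma (Lemmas~\ref{lemma:gamma_smooth}--\ref{lemma:pair_pointwise_DP}), lifts it to $n$ players by fixing the other players' messages (Lemma~\ref{lemma:swap_dp}), and then composes over the $n-1$ swaps (Lemma~\ref{lemma:sum_swap_series}). Crucially, the paper's notion of $\gamma$-smoothness (Definition~\ref{def:gamma_smooth}) is taken to \emph{be} the target conclusion: the $2m$-element output multiset is $\gamma$-smooth iff the distribution of sums over random $m$-subsets is pointwise within a multiplicative $(1\pm\gamma)$ of uniform. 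Lemma~\ref{lemma:gamma_smooth} establishes this holds with high probability by an entirely elementary route: over the encoder's randomness, the subset-sum indicators $\{Z_I(x)\}$ are pairwise independent, so a Chebyshev bound and a union bound over $x\in\mathcal{Y}$ suffice; no Fourier analysis appears. Because each swap only ever touches two encoders, the paper only ever needs this for $2m$-element multisets.

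You, by contrast, define smoothness via a Fourier condition on the empirical distribution of $\mu$ and then need your ``part~(a)'': a deterministic implication from small Fourier coefficients of a fixed multiset $\mu$ (possibly as large as $nm$ elements) to near-uniformity of a \emph{without-replacement} $m$-subset sum, with multiplicative error $e^{\pm\epsilon'}$, $\epsilon' = O(\epsilon/n)$. You yourself flag this as ``where the real work lies'' and do not supply it. This is the gap: the Fourier coefficient of the subset-sum law at frequency $\xi$ is $e_m(\omega^{\xi y_1},\dots,\omega^{\xi y_{|\mu|}})/\binom{|\mu|}{m}$, an elementary-symmetric-polynomial average whose control in terms of first-order Fourier smoothness of $\mu$ is not routine, and summing $\lvert\cdot\rvert$ over all $M$ nonzero modes with enough cancellation to land at $m>10\log(n/(\epsilon\delta))$ is precisely the hard estimate. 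The paper sidesteps this entirely by (i) defining smoothness \emph{as} the conclusion and (ii) exploiting pairwise independence over the encoder's coins, which is a probabilistic statement rather than a deterministic multiset property. If you want to keep the hybrid/ideal-distribution packaging, the cleaner fix is to borrow the paper's definition of smoothness and the Chebyshev/pairwise-independence proof of it; the Fourier detour buys you nothing here. Secondly, you quietly assume $\mu$ has no repeated elements (``the remaining combinatorial factors are identical''); the paper handles this explicitly by folding a distinctness requirement into $\smoothset$ and bounding the collision probability in Lemma~\ref{lemma:gamma_smooth}. Your hybrid-versus-swaps decomposition is otherwise sound and would yield the same parameters once the missing estimate is supplied.
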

In addition to analyzing error and privacy of our new protocol we consider its resilience towards untrusted users that may deviate from the protocol.
While the shuffled model is vulnerable to such attacks in general~\cite{balle19}, we argue in Section~\ref{sec:resilience} that the privacy guarantees of our protocol are robust even to a large fraction of colluding users.
For reasons of exposition we show Theorem~\ref{theorem:main_sum_DP} before Theorem~\ref{theorem:main_classical_DP}.
The technical ideas behind our new results are discussed in Section~\ref{sec:protocol}.
Next, we discuss implications for machine learning and the relation to previous work.

Concurrently and independently of our work, Balle et al. obtained a result similar to Theorem~\ref{theorem:main_classical_DP} \cite{2019arXiv190609116B}. Their algorithm is similar to ours and, as they point out, a similar algorithm was used by Ishai et al. in their work on cryptography from anonymity \cite{ishai2006cryptography}. Our privacy analysis however is different from theirs. In particular, they use a different noise distribution which leads to a better dependence on $\epsilon$ and $\delta$, achieving a number of messages of $O(\log(n/\delta))$, a message size of $O(\log{n})$ bits and an expected error of $O(1/\epsilon)$.

We point out that following the appearance of this work, tighter quantitative bounds for the secure aggregation problem that was studied in \cite{ishai2006cryptography} were independently obtained in \cite{ghazi2019private_sum_imp, balle_privacy_2019constantIKOS}. Using a reduction of \cite{2019arXiv190609116B}, these imply more efficient differential privacy protocols for aggregation in the shuffled model. Moreover, \cite{ghazi2019private_sum_imp} obtained near tight lower bounds on the corresponding secure aggregation problem. We refer to \cite{ghazi2019private_sum_imp} for a comparison of the various followup works.

\subsection{Discussion of Related Work and Applications}\label{sec:related}

Our protocol is applicable in any setting where secure aggregation is applied.
Below we mention some of the most significant examples and compare to existing results in the literature.

\paragraph{Federated Learning.}
Our main application in a machine learning context is gradient descent-based federated learning~\cite{mcmahan2016communication}.
The idea is to avoid collecting user data, and instead compute weight updates in a distributed manner by sending model parameters to users, locally running stochastic gradient descent on private data, and aggregating model updates over all users. 
Using a \emph{secure aggregation} protocol (see e.g.~\cite{practicalSecAgg}) guards against information leakage from the update of a single user, since the server only learns the aggregated model update.
A federated learning system based on these principles is currently used by Google to train neural networks on data residing on users' phones~\cite{GoogleBlog17}.

Current practical secure aggregation protocols such as that of Bonawitz et al.~\cite{practicalSecAgg} have user computation cost $O(n^2)$ and total communication complexity $O(n^2)$, where $n$ is the number of users.
This limits the number of users that can participate in the secure aggregation protocol.
In addition, the privacy analysis assumes of an ``honest but curious'' server that does not deviate from the protocol, so some level of trust in the secure aggregation server is required.
In contrast, protocols based on shuffling work with much weaker assumptions on the server~\cite{bittau17,cheu19}.
In addition to this advantage, total work and communication of our new protocol scales near-linearly with the number of users.

\paragraph{Differentially Private Aggregation in the Shuffled Model.}
It is known that gradient descent can work well even if data is accessible only in noised form, in order to achieve differential privacy~\cite{abadi2016deep}.
Note that in order to run gradient descent in a differentially private manner, privacy parameters need to be chosen in such a way that the combined privacy loss over many iterations is limited.

\begin{figure}[t]
    \centering
    \def\arraystretch{1.4}
    \begin{tabular}{|l|c|c|c|c|}
        \hline
        {\bf Reference} & 
        {\bf \#messages / $n$} & 
        \begin{tabular}{@{}c@{}}{\bf Message}\\{\bf size}\end{tabular} & 
        {\bf Expected error} & 
        {\bf Privacy protection}\\
        \hline
        \hline
        Cheu et al.~\cite{cheu19} & 
        \begin{tabular}{@{}c@{}} $\varepsilon\sqrt{n}$\\ $m$\end{tabular} &
        1 & 
        \begin{tabular}{@{}c@{}} $\frac{1}{\varepsilon} \log\frac{n}{\delta}$\\ $\sqrt{n} / m + \frac{1}{\varepsilon} \log\frac{1}{\delta}$ \end{tabular} &
        Single-user change\\
        \hline
        Balle et al.~\cite{balle19} & $1$ & $\log n$ & $\frac{n^{1/6}\log^{1/3}(1/\delta)}{\varepsilon^{2/3}}$ & Single-user change\\
        \hline
        \emph{New} &
        \begin{tabular}{@{}c@{}}$\log(\tfrac{n}{\varepsilon \delta})$\\ $m > \log(\tfrac{n}{\varepsilon \delta})$ \end{tabular} &
        \begin{tabular}{@{}c@{}}$\log(\tfrac{n}{\delta})$\\ $m$ \end{tabular} &
        \begin{tabular}{@{}c@{}}$\frac{1}{\varepsilon} \sqrt{\log\frac{1}{\delta}}$\\ $2^{-m}$ \end{tabular} &
        \begin{tabular}{@{}c@{}}Single-user change\\ Sum-preserving change\end{tabular}\\
        \hline
    \end{tabular}
    \caption{Comparison of differentially private aggregation protocols  in the shuffled model with $(\varepsilon,\delta)$-differential privacy. 
    The number of users is $n$, and $m$ is an integer parameter.
    Message sizes are in bits; asymptotic notation is suppressed for readability.
    We consider two types of privacy protection, corresponding to different notions of ``neighboring dataset'' in differential privacy:
    In the first one, which was considered in previous papers, datasets are considered neighboring if they differ in the data of a single user.
    In the latter, datasets are considered neighboring if they have the same sum.
    }
    \label{fig:comparison}
\end{figure}

Each aggregation protocol shown in Figure~\ref{fig:comparison} represents a different trade-off, optimizing different parameters. Our protocols are the only ones that avoid $n^{\Omega(1)}$ factors in both the communication per user and the error.

\paragraph{Private Sketching and Statistical Learning.}
At first glance it may seem that aggregation is a rather weak primitive for combining data from many sources in order to analyze it.
However, research in the area of data stream algorithms has uncovered many non-trivial algorithms that are small \emph{linear sketches}, see e.g.~\cite{cormode2011synopses,woodruff2014sketching}.
Linear sketches over the integers (or over a finite field) can be implemented using secure aggregation by computing linear sketches locally and summing them up over some range that is large enough to hold the sum.
This unlocks many differentially private protocols in the shuffled model, e.g.~estimation of $\ell_p$-norms, quantiles, heavy hitters, and number of distinct elements.

Second, as observed in~\cite{cheu19} we can translate any \emph{statistical query} over a distributed data set to an aggregation problem over numbers in $[0,1]$.
That is, every learning problem solvable using a small number of statistical queries~\cite{kearns1998efficient} can be solved privately and efficiently in the shuffled model.

\subsection{Invisibility Cloak Protocol}\label{sec:protocol}

We use a technique from protocols for secure multi-party aggregation (see e.g.~\cite{secAggSurvey}):  
Ensure that individual numbers passed to the analyzer are fully random by adding random noise terms, but \emph{coordinate} the noise such that all noise terms cancel, and the sum remain the same as the sum of the original data.
Our new insight is that in the shuffled model the addition of zero-sum noise can be done without coordination between the users.
Instead, each user \emph{individually} produces numbers $y_1,\dots,y_m$ that are are fully random except that they sum to $x_i$, and pass them to the shuffler.
This is visualized in Figure~\ref{fig:cloak}.
Conceptually the noise we introduce acts as an \emph{invisibility cloak}: The data is still there, possible to aggregate, but is almost impossible to gain any other information from.

\begin{figure}
    \centering
    \includegraphics[width=0.8\textwidth]{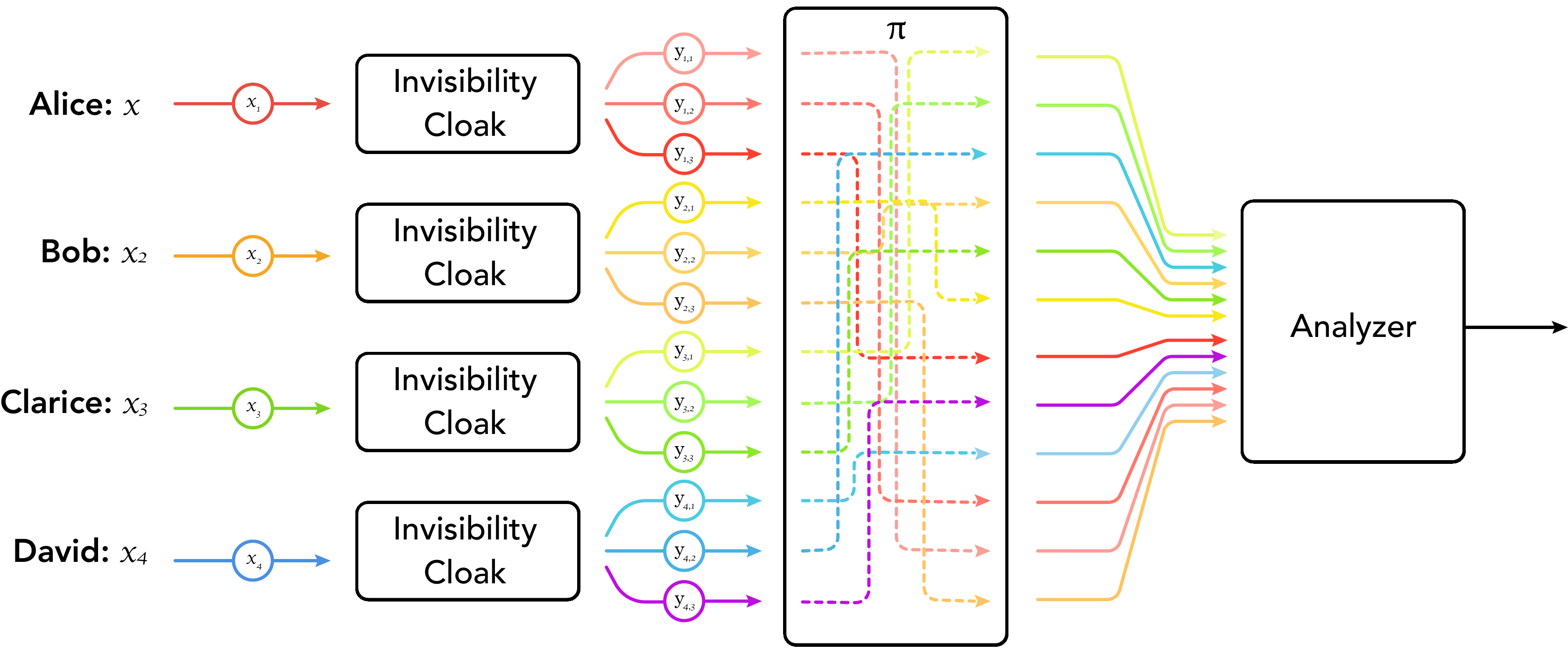}
    \caption{Diagram of the Invisibility Cloak Protocol for secure multi-party aggregation}
    \label{fig:cloak}
\end{figure}

The details of our \emph{encoder} is given as Algorithm~\ref{alg:encoder}.
For parameters $N$, $k$, and $m$ to be specified later it converts each input $x_i$ to a set of random values $\{y_1,\dots,y_m\}$ whose sum, up to scaling and rounding, equals $x_i$.
When the output of all encoders $E_{N,k,m}(x_i)$ is composed with a shuffler this directly gives differential privacy with respect to sum-preserving changes of data (where the sum is considered after rounding).
To achieve differential privacy with respect to single-user changes the protocol must be combined with a pre-randomizer that adds noise to each $x_i$ with some probability, see discussion in Section~\ref{sec:single-user}.

Our \emph{analyzer} is given as Algorithm~\ref{alg:analyzer}.
It computes $\bar{z}$ as the sum of the inputs (received from the shuffler) modulo $N$, which by definition of the encoder is guaranteed to equal the sum $\sum_i \lfloor x_i k\rfloor$ of scaled, rounded inputs.
If $x_1,\dots,x_n \in [0,1]$ this sum will be in $[0,nk]$ and $\bar{z}/k$ will be within $n/k$ of the true sum $\sum_i x_i$.
In the setting where a pre-randomizer adds noise to some inputs, however, we may have $z\not\in [0,nk]$ in which case we round to the nearest feasible output sum, $0$ or $n$.

\paragraph{Privacy Intuition.}
The output of each encoder is very close to fully random in the sense that every set of $m-1$ values are independent and uniformly random.
Only by summing exactly the outputs of an encoder (or several encoders) do we get a value that is not uniformly random.
On the other hand, many size-$m$ subsets \emph{look like} the output of an encoder in the sense that the sum of elements corresponds to a feasible value $x_i$.
In fact, something stronger is true:
For every possible input with the same sum as the true input (sum taken after scaling and rounding) we can, with high probability, find a splitting of the shuffler's output consistent with that input.
Furthermore, the \emph{number} of such splittings is about the same for each potential input.

Our technique can be compared to the recently proposed ``privacy blanket''~\cite{balle19}, which introduces uniform, random noise to replace some inputs.
Since that paper operates in a single-message model there is no possibility of ensuring perfect noise cancellation, and thus the number of noise terms needs to be kept small, which in turn means that a rather coarse discretization is required for differential privacy.
Since the noise we add is zero-sum we can add much more noise, and thus we do not need a coarse discretization, ultimately resulting in much higher accuracy.

\begin{algorithm}[b]

\Fn{$E_{N,k,m}(x)$}{
\KwIn{$x \in \R$, integer parameters $N, k, m \geq 4$}
\KwOut{Multiset $\{y_1,\dots, y_m\} \subseteq \{0,\dots,N-1\}$} \vspace{5pt}

Let $\bar{x} \leftarrow \lfloor x k \rfloor $\\ 
\For{$j = 1,\dots,m-1$}{
	${y}_j \leftarrow \mathit{Uniform}(\{0,\dots,N-1\})$
}
${y}_m \leftarrow \left(\bar{x} - \sum_{j=1}^{m-1} {y}_j\right) \mod N$\\
\Return{$\{{y}_1,\dots, {y}_m\}$}
}
\caption{Invisibility Cloak Encoder Algorithm} \label{alg:encoder}
\end{algorithm}

\begin{algorithm}[t]

\Fn{$A_{N,k,n}(y_{1},\dots, y_{mn})$}{
\KwIn{$(y_{1},\dots, y_{nm})\in \{0,\dots,N-1\}^{mn}$, integer parameters $k$, $n$, odd $N > 3nk$}
\KwOut{$z\in [0,n]$}
\vspace{5pt}

$\bar{z} \leftarrow \sum_i y_{i} \mod N$\\
\lIf{$\bar{z} > 2nk$}{\Return{$0$}}
\lElseIf{$\bar{z} > nk$}{\Return{$n$}}
\lElse{\Return{$\bar{z} / k$}}
}
\caption{Analyzer} \label{alg:analyzer}
\end{algorithm}

\section{Analysis}

\paragraph{Overview.}
We first consider privacy with respect to sum-preserving changes to the input, arguing that observing the output of the shuffler gives almost no information on the input, apart from the sum.
Our proof strategy is to show privacy in the setting of \emph{two} players and then argue that this implies privacy for $n$ players, essentially because the two-player privacy holds regardless of the behavior of the other players.
In the two-player case we first argue that with high probability the outputs of the encoders satisfy a \emph{smoothness} condition saying that every potential input $x_1$, $x_2$ to the encoders corresponds to roughly the same number of divisions of the $2m$ shuffler outputs into sets of size $m$.
Finally we argue that smoothness in conjunction with the $2m$ elements being unique implies privacy.

\subsection{Preliminaries}\label{sec:preliminaries}


\paragraph{Notation.}
We use $\mathit{Uniform}(R)$ to denote a value uniformly sampled from a finite set $R$, and denote by $S_{t}$ the set of all permutations of $\{0,\dots,t-1\}$.
Unless stated otherwise, sets in this paper will be \emph{multisets}.
It will be convenient to work with \emph{indexed} multisets whose elements are identified by indices in some set $I$.
We can represent a multiset $M \subseteq R$ with index set $I$ as a function $M: I\rightarrow R$.
Multisets $M_1$ and $M_2$ with index sets $I_1$ and $I_2$ are considered identical if there exists a bijection $\pi: I_1 \rightarrow I_2$ such that $M_1(i) = M_2(\pi(i))$ for all $i\in I_1$.
For disjoint $I_1$ and $I_2$ we define the union of $M_1$ and $M_2$ as the function defined on $I_1\cup I_2$ that maps $i_1\in I_1$ to $M_1(i_1)$ and $i_2\in I_2$ to $M_2(i_2)$.

\paragraph{Differential Privacy and the Shuffled Model.}
We consider the established notion of differential privacy, formalizing that the output distribution does not differ much between a certain data set and any ``neighboring'' dataset.
\begin{definition}
  \label{def:classical_DP}
  Let $\mathcal{A}$ be a randomized algorithm taking as input a dataset and let $\epsilon \geq 0$ and $\delta \in (0,1)$ be given parameters. Then, $\mathcal{A}$ is said to be \emph{$(\epsilon, \delta)$-differentially private} if for all neighboring datasets $D_1$ and $D_2$ and for all subsets $S$ of the image of $\mathcal{A}$, it is the case that $\Pr[\mathcal{A}(D_1) \in S] \le e^{\epsilon} \cdot \Pr[\mathcal{A}(D_2) \in S] + \delta$, where the probability is over the randomness used by the algorithm $\mathcal{A}$.
\end{definition}
We consider two notions of ``neighboring dataset'': 1) That the input of a single user is changed, but all other inputs are the same%
, and 2) That the sum of user inputs is preserved.
In the latter case we consider the sum after rounding to the nearest lower multiple of $1/k$, for a large integer parameter $k$, i.e., $(x_1,\dots,x_n)\in [0,1]^n$ is a neighbor of $(x'_1,\dots,x'_n)\in [0,1]^n$ if and only if $\sum_i \lfloor x_i k\rfloor = \sum_i \lfloor x'_i k\rfloor$.
(Alternatively, just assume that the input is discretized such that $x_i k$ is integer.)

In the shuffled model, the algorithm that we want to show differentially private is the composition of the shuffler and the encoder algorithm run on user inputs. 
In contrast to the \emph{local model} of differential privacy, the outputs of encoders do not need to be differentially private.
We refer to~\cite{cheu19} for details.

\splitter{
\subsection{Common lemmas}
}
{}

Let $\mathcal{Y} = \{ 0,\dots,N-1 \}$, and consider some indexed multiset $E = \{y_1,\dots,y_{2m}\} \subseteq \mathcal{Y}$ that can possibly be obtained as the union of the outputs of two encoders.
Further, let $\mathcal{I}$ denote the collection of subsets of $\{1,\dots,2m\}$ of size $m$.
For each $I\in\mathcal{I}$ define $X_I(E) = \sum_{i\in I} y_i \mod N$.
We will be interested in the following property of a given (fixed) multiset $E$:
\begin{definition}\label{def:gamma_smooth}
  A multiset $E = \{y_1,\dots,y_{2m}\}$ is \emph{$\gamma$-smooth} if the distribution of values $X_I(E)$ for $I\in\mathcal{I}$ is close to uniform in the sense that $\Pr_{i\in\mathcal{I}}[X_I(E)=x] \in \left[\frac{1-\gamma}{N},\frac{1+\gamma}{N} \right]$ for every $x\in\mathcal{Y}$.
\end{definition}

We name the collection of multisets that are $\gamma$-smooth and contain $2m$ distinct elements:
\[
  \binom{\mathcal{Y}}{2m}_{\gamma\textrm{-smooth}} = \left\{ \{y_1,\dots,y_{2m}\} \; | \;   \{y_1,\dots,y_{2m}\} \textrm{ is $\gamma$-smooth} \textrm{ and } y_1,\dots,y_{2m} \textrm{ are distinct} \right\} \enspace .
\]

Given $x_1, x_2 \in [0,1]$ such that $x_1 k$ and $x_2 k$ are integers, consider the multisets $E_{N,k,m}(x_1) = \{y_1,\dots,y_m\}$ and $E_{N,k,m}(x_2) = \{y_{m+1},\dots,y_{2m}\}$, and let $E(x_1,x_2) = \{y_1,\dots,y_{2m}\}$ be their multiset union.
The multiset $E(x_1,x_2)$ is a random variable due to the random choices made by the encoder algorithm.

\medskip

\begin{lemma}\label{lemma:gamma_smooth}
For every $m\geq 4$, $\gamma > 6\sqrt{m}/2^ {2m}$ and for every choice of $x_1, x_2 \in \mathcal{Y}$ we have $\Pr\left[E(x_1,x_2) \not\in \binom{\mathcal{Y}}{2m}_{\gamma\textrm{-smooth}}\right]  < \frac{2m^2}{N} + \frac{18\sqrt{m}\,N^2}{\gamma^2 2^{2m}} \enspace$.
\end{lemma}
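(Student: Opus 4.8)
The plan is to union-bound the failure probability by two events that contribute the two summands of the claimed bound: $B_1$, that $y_1,\dots,y_{2m}$ are not all distinct, and $B_2$, that $E(x_1,x_2)$ is not $\gamma$-smooth. For $B_1$ I would sum over the $\binom{2m}{2}<2m^2$ pairs $\{i,j\}$: if neither $y_i$ nor $y_j$ is one of the two determined coordinates $y_m,y_{2m}$ then they are independent and uniform on $\{0,\dots,N-1\}$, hence collide with probability $1/N$; if one of them is determined, then conditioning on all but one of the free coordinates turns the collision event into a single congruence $\alpha y\equiv c \pmod N$ with $\alpha\in\{1,2\}$, which---since $N$ is odd, so $\gcd(\alpha,N)=1$---has a unique solution and probability $1/N$. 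Hence $\Pr[B_1]<2m^2/N$.

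For $B_2$ I would run a second-moment argument in the Fourier basis over $\Z_N$. Fix a realization $E$, put $f_E(x)=\Pr_{I\in\mathcal I}[X_I(E)=x]$, and let $\widehat f_E(t)=\E_{I\in\mathcal I}\big[\omega^{t\sum_{i\in I}y_i}\big]$, where $\omega=e^{2\pi i/N}$ is a primitive $N$th root of unity; then $\widehat f_E(0)=1$ and $f_E(x)-\tfrac1N=\tfrac1N\sum_{t\ne0}\widehat f_E(t)\omega^{-tx}$, so $\max_x\big|f_E(x)-\tfrac1N\big|\le\tfrac1N\sum_{t\ne0}|\widehat f_E(t)|$ and, by Cauchy--Schwarz, $\big(\sum_{t\ne0}|\widehat f_E(t)|\big)^2\le N\sum_{t\ne0}|\widehat f_E(t)|^2$. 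Thus $E$ is $\gamma$-smooth whenever $\sum_{t\ne0}|\widehat f_E(t)|^2\le\gamma^2/N$, and by Markov $\Pr[B_2]\le\tfrac{N}{\gamma^2}\E_E\!\big[\sum_{t\ne0}|\widehat f_E(t)|^2\big]$; it remains to bound this expectation.

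The crux is evaluating $\E_E[|\widehat f_E(t)|^2]$ for a fixed $t\ne0$. Writing $|\widehat f_E(t)|^2=\E_{I,I'}\big[\omega^{t\sum_i c_iy_i}\big]$ with $I,I'$ independent and $c_i=\indicator[i\in I]-\indicator[i\in I']$, I would substitute $y_m=(\bar x_1-\sum_{j<m}y_j)\bmod N$ and $y_{2m}=(\bar x_2-\sum_{m<j<2m}y_j)\bmod N$, which rewrites $\sum_i c_iy_i$ as $\sum_{j<m}(c_j-c_m)y_j+\sum_{m<j<2m}(c_j-c_{2m})y_j+c_m\bar x_1+c_{2m}\bar x_2$ in the now-independent uniform variables; the expectation over the encoder coins then factors as $\omega^{t(c_m\bar x_1+c_{2m}\bar x_2)}$ times a product of indicators $\indicator[N\mid t(c_j-c_m)]$, $\indicator[N\mid t(c_j-c_{2m})]$. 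Since $1\le t\le N-1$, $c_j-c_m\in\{-2,\dots,2\}$, and $N$ is odd, these indicators force $(c_i)$ to be constant on each of the blocks $\{1,\dots,m\}$ and $\{m+1,\dots,2m\}$; a short case analysis shows this happens only for $I=I'$ (contributing $\omega^0=1$) or $\{I,I'\}=\{\{1,\dots,m\},\{m+1,\dots,2m\}\}$ (the two orderings contributing $\omega^{\pm t(\bar x_1-\bar x_2)}$), so $\E_E[|\widehat f_E(t)|^2]=\binom{2m}{m}^{-1}+2\binom{2m}{m}^{-2}\cos\!\big(\tfrac{2\pi t(\bar x_1-\bar x_2)}{N}\big)$.

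Summing over $t\ne0$, the cosine sum is at most $N-1$ in absolute value, so $\E_E\!\big[\sum_{t\ne0}|\widehat f_E(t)|^2\big]\le\tfrac{N}{\binom{2m}{m}}\big(1+\tfrac{2}{\binom{2m}{m}}\big)=O\!\big(N/\binom{2m}{m}\big)$; plugging this into the Markov bound and using the standard estimate $\binom{2m}{m}\ge\Omega(2^{2m}/\sqrt m)$ gives $\Pr[B_2]=O\!\big(\sqrt m\,N^2/(\gamma^2 2^{2m})\big)$, whence $\Pr[B_1]+\Pr[B_2]$ matches the claim; the hypotheses $m\ge4$ and $\gamma>6\sqrt m/2^{2m}$ are mild conditions that only serve to absorb constants and the lower-order $\binom{2m}{m}^{-2}$ term. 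I expect the computation of $\E_E[|\widehat f_E(t)|^2]$---getting the factorization and the phase right, ruling out the characters for a general odd (not prime) modulus $N$, and enumerating exactly which pairs $(I,I')$ survive---to be the main obstacle; the distinctness bound, Parseval/Cauchy--Schwarz, and the closing Markov step are routine.
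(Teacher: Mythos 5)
Your proof is correct and gives a slightly tighter constant than the paper. The distinctness bound is the same union bound the paper uses; your extra care about the ``determined'' coordinates $y_m,y_{2m}$ (the congruence $2y\equiv c\pmod N$ having a unique solution because $N$ is odd) is the right way to justify the paper's terser assertion that every pair is uniform and pairwise independent. For the $\gamma$-smoothness half, however, you take a genuinely different route. The paper works in the physical basis: it asserts pairwise independence of $\{X_I\}_{I\in\mathcal I}$, bounds $\Var(Z'(x))\le\mu$ for $Z'(x)=\sum_{I\ne I_1,I_2}\indicator_{X_I=x}$, applies Chebyshev for each fixed $x$, and then union-bounds over the $N$ values of $x$. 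You instead work over the dual group $\widehat{\Z_N}$: bound $\max_x|f_E(x)-1/N|$ by the $\ell^1$ mass of the nontrivial Fourier coefficients, pass to $\ell^2$ by Cauchy--Schwarz, and apply a single Markov bound to $\E_E\big[\sum_{t\ne0}|\widehat f_E(t)|^2\big]$. Both are second-moment arguments and both produce the same $N^2/(\gamma^2\binom{2m}{m})$ scaling: the paper picks up one $N$ from $1/\mu$ in Chebyshev and one from the union bound over $x$, whereas you pick up one from Cauchy--Schwarz and one from summing over the $N-1$ nontrivial characters. What the Fourier route buys is an explicit enumeration of exactly which pairs $(I,I')$ survive the expectation (only $I=I'$ and the two orderings of $\big(\{1,\dots,m\},\{m+1,\dots,2m\}\big)$), so the pairwise-independence claim and the precise role of $N$ being odd (killing the $\indicator[N\mid 2t]$ terms) become transparent rather than asserted; the cost is invoking character-sum machinery where the paper is purely combinatorial. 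Your resulting constant is about $3$ rather than $18$, and you are right that the hypothesis $\gamma>6\sqrt m/2^{2m}$ is not actually needed by your argument: the paper needs it to absorb the additive $+2$ from the deterministic sets $I_1,I_2$ into a $\gamma/3$ slack, whereas in your version the analogous $2/\binom{2m}{m}^2$ contribution is lower order unconditionally.
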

\splitter{
\begin{proof}[Proof of Lemma~\ref{lemma:gamma_smooth}]
    We first upper bound the probability that the multiset $E(x_1,x_2)$ has any duplicate elements.
    For $i\ne j$ consider the event $\mathcal{E}_{i,j}$ that $y_i = y_j$.
    Since $m>2$ we have that every pair of distinct values $y_i, y_j$ are uniform in $\mathcal{Y}$ and independent, so $\Pr(\mathcal{E}_{i,j}) = 1/N$.
    A union bound over all $\binom{2m}{2} < 2m^2$ pairs yields an upper bound of $2m^2/N$ on the probability that there is at least one duplicate pair.

    \medskip
    
    Second, we bound the probability that $E(x_1,x_2)$ is not $\gamma$-smooth.
    Let $I_1 = \{1,\dots,m\}$ and $I_2 = \{m+1,\dots,2m\}$.
    Then by definition of the encoder, $X_{I_1}(E(x_1,x_2)) = x_1$ and $X_{I_2}(E(x_1,x_2)) = x_2$ with probability~1.
    For each $I\in \mathcal{I}\backslash\{I_1,I_2\}$ we have that $X_I$ is uniformly random in the range $\mathcal{Y}$, over the randomness of the encoder.
    Furthermore, observe that the random variables $\{ X_I(E(x_1,x_2)) \}_{I\in\mathcal{I}}$ are pairwise independent.
    Let $Z_I(x)$ be the indicator random variable that is 1 if and only if $X_I(E(x_1,x_2)) = x$.
    Let $\mathcal{I'} = \mathcal{I}\backslash\{I_1,I_2\}$.
    For each $x\in\mathcal{Y}$ and $I\in \mathcal{I'}$ we have $\E[Z_I(x)] = 1/|\mathcal{Y}| = 1/N$.
    The sum $Z(x) = \sum_{I\in\mathcal{I}} Z_I(x)$ equals the number of sets in $\mathcal{I}$ such that $X_I(E(x_1,x_2)) = x$.
    Since $Z_{I_1}(x) = \indicator_{x_1 = x}$ and $Z_{I_2}(x) = \indicator_{x_2 = x}$ it will be helpful to disregard these fixed terms in $Z(x)$.
    Thus we define $Z'(x) = \sum_{I\in\mathcal{I'}} Z_I(x)$, which is a sum of $|\mathcal{I}|-2$ pairwise independent terms, each with expectation $\E[Z_I(x)] = 1/N$.
    Define $\mu = \E[Z'(x)] = |\mathcal{I'}| / N$.
    We bound the variance of $Z'(x)$:
    \[ \Var(Z'(x)) = \E\left[\left(\sum_{I\in\mathcal{I'}} (Z_I(x) - \tfrac{1}{N}) \right)^2\right] = \E\left[\sum_{I\in\mathcal{I'}} \left(Z_I(x) - \tfrac{1}{N}\right)^2 \right] < \E\left[\sum_{I\in\mathcal{I'}} Z_I(x) \right] = \mu \enspace . \]
    The second equality uses that $\E[(Z_{I_1}(x) - \tfrac{1}{N})(Z_{I_2}(x) - \tfrac{1}{N})] = 0$ for $I_1\ne I_2$ because it is a product of two independent, zero-mean random variables.
    The inequality holds because $Z_I(x)$ is an indicator function.
    By Chebychev's inequality over the random choices in the encoder, for any $\sigma > 0$:
    \begin{equation}\label{eq:prob_small_deviation}
    \Pr\left[ | Z'(x) - \mu | > \sigma\mu \right] < \frac{\Var(Z'(x))}{(\sigma\mu)^2} < \frac{1}{\sigma^2 \mu} \enspace .
    \end{equation}
    For $m\geq 4$ we can bound $|\mathcal{I}|-2 = \binom{2m}{m} - 2$ as follows:
    \[ 2^{2m-1} / \sqrt{m} < \binom{2m}{m} - 2 < 2^{2m} / \sqrt{m} \] 
    Using this for upper and lower bounding $\mu$ in (\ref{eq:prob_small_deviation}), and choosing $\sigma = \gamma / 3$ we get:
    \begin{equation*}
    \Pr\left[ | Z'(x) - \mu | > \gamma\, 2^{2m} / (3N\sqrt{m}) \right] < \frac{18 \sqrt{m} N}{\mu^2 2^m} \enspace .
    \end{equation*}
    A union bound over all $x\in\mathcal{Y}$ implies that with probability at least $1 - \frac{18\sqrt{m}\,N^2}{\gamma^2 2^{2m}}$:
    \begin{equation}\label{success_prob}
    \forall x\in\mathcal{Y} \colon | Z'(x) - \mu | \leq \gamma\, 2^{2m} / (3N\sqrt{m})
    \end{equation}
    Conditioned on (\ref{success_prob}) we have:
    \begin{align*}
        \Pr_{i\in\mathcal{I}}[X_I(E(x_1,x_2))=x]  = Z(x)/|\mathcal{I}|
        & \leq (Z'(x)+2) / |\mathcal{I}|\\
        & \leq \frac{\mu + 2 + \gamma\, 2^{2m} / (2 N\sqrt{m})}{|\mathcal{I}|}\\
        & \leq \frac{1}{N} + \frac{2+ \gamma\, 2^{2m} / (3N\sqrt{m})}{2^{2m-1} / \sqrt{m}}\\
        & = \frac{1+\tfrac{\sqrt{m}}{2^{2m-1}}+2\gamma/3}{N}
         \leq \frac{1+\gamma}{N} \enspace .
    \end{align*}
    The final inequality uses the assumption that $\gamma > 6\sqrt{m}/2^ {2m}$.
    A similar computation shows that conditioned on (\ref{success_prob}), $\Pr_{i\in\mathcal{I}}[X_I(E(x_1,x_2))=x] \geq \frac{1-\gamma}{N}$.
\end{proof}
}
{The proof of Lemma~\ref{lemma:gamma_smooth} appears in the supplementary material.}

\splitter{
\begin{corollary}\label{corollary:gamma_smooth}
For $m\geq 4$, and $m = 3 \lceil \log N\rceil$,
\[\Pr\left[E(x_1,x_2) \not\in \binom{\mathcal{Y}}{2m}_{N^{-1}\textrm{-smooth}}\right]  < \frac{19 \lceil\log N\rceil^2}{N} \enspace .\]
\end{corollary}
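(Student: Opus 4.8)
\textbf{Proof plan for Corollary~\ref{corollary:gamma_smooth}.}
The plan is to simply instantiate Lemma~\ref{lemma:gamma_smooth} with the specific choices $\gamma = 1/N$ and $m = 3\lceil \log N\rceil$, and then bound each of the two resulting error terms by something of order $\lceil\log N\rceil^2 / N$. First I would check that the hypotheses of the lemma are met: we need $m \geq 4$ (true by assumption) and $\gamma > 6\sqrt{m}/2^{2m}$. With $\gamma = 1/N$ and $m = 3\lceil\log N\rceil$, the quantity $2^{2m} = 2^{6\lceil\log N\rceil} \geq N^6$, so $6\sqrt{m}/2^{2m} \leq 6\sqrt{3\lceil\log N\rceil}/N^6$, which is far below $1/N$ for all $N \geq 2$; so the hypothesis holds (modulo checking the trivial small-$N$ cases, or noting $\log$ is to base $2$).

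Next I would substitute into the conclusion $\Pr[E(x_1,x_2)\notin \smoothset] < \frac{2m^2}{N} + \frac{18\sqrt{m}\,N^2}{\gamma^2 2^{2m}}$. The first term becomes $\frac{2(3\lceil\log N\rceil)^2}{N} = \frac{18\lceil\log N\rceil^2}{N}$. For the second term, with $\gamma^{-2} = N^2$ and $2^{2m} \geq N^6$, we get $\frac{18\sqrt{m}\,N^2 \cdot N^2}{N^6} = \frac{18\sqrt{3\lceil\log N\rceil}}{N^2}$, which is at most $\frac{\lceil\log N\rceil^2}{N}$ for $N$ not too small (indeed $18\sqrt{3t} \le t^2 \cdot N$ comfortably once $t = \lceil\log N\rceil \ge 1$ and $N \ge 2$). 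Adding the two bounds gives strictly less than $\frac{19\lceil\log N\rceil^2}{N}$, as claimed.

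This is entirely a routine calculation; the only mild subtlety — and the step I would be most careful about — is making sure the ceiling $m = 3\lceil\log N\rceil$ (rather than $3\log N$) makes $2^{2m}$ at least $N^6$, which it does since $2\cdot 3\lceil\log N\rceil \geq 6\log N$, and then confirming the slack in the second term really is absorbed by the constant going from $18$ to $19$ in the first term for every admissible $N$ (one may simply require $N \geq $ some small constant, or observe the bound is vacuous when $\lceil\log N\rceil^2/N \geq 1$). No genuine obstacle is expected.
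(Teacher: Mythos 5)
Your proposal matches the paper's own proof: both simply plug $\gamma = N^{-1}$ and $m = 3\lceil\log N\rceil$ into Lemma~\ref{lemma:gamma_smooth}, note $2^{2m}\ge N^6$, and observe that the second term is absorbed by changing the constant $18$ to $19$ in the first. One small caveat: your parenthetical claim that $18\sqrt{3t}\le t^2 N$ holds ``comfortably once $t\ge 1$ and $N\ge 2$'' is false for small $N$ (e.g.\ $N=4$, $t=2$ gives $18\sqrt{6}\approx 44>16$), but your fallback observation that the stated bound is vacuous in that regime repairs the argument, and the paper itself quietly assumes $\log N\ge 3$ for the same reason.
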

\begin{proof}
    We invoke Lemma~\ref{lemma:gamma_smooth} with $\gamma = N^{-1}$ and $m = 3 \lceil\log N\rceil$.
    The probability bound is
    \[\frac{18 \lceil \log N\rceil^2}{N} + \frac{18\sqrt{3\lceil \log N\rceil}\, N^2}{ N^{-2} \, 2^{6\lceil \log N\rceil}}
    < \frac{18 \lceil \log N\rceil^2}{N} + \frac{18 \lceil \log N\rceil}{ N^{2}} \enspace .\]
    Because $\log N\geq 3$ and $N \geq 6$ this shows the stated bound.
\end{proof}
}
{}

 Denote by $E(x_1,x_2;y_1,\dots,y_{m-1}, y_{m+1}, \dots, y_{2m-1})$ the sequence obtained by the deterministic encoding for given values $y_1, \dots, y_{m-1}, y_{m+1}, \dots, y_{2m-1} \in \mathcal{Y}$ in Algorithm~\ref{alg:encoder}.
 \splitter{Moreover, we denote by $\overline{E}(x_1, x_2, y_1,\dots,y_{m-1}, y_{m+1}, \dots, y_{2m-1})$ the corresponding multiset.

\begin{lemma}\label{lemma:two_player_set_probability}
    For any $y^* \in \binom{\mathcal{Y}}{2m}$ and for any $x_1$ and $x_2$, it is the case that
    \begin{equation*}
      \Pr[E(x_1, x_2) = y^*] = \frac{1}{|\mathcal{Y}|^{2 (m-1)}} \cdot \displaystyle\sum\limits_{\pi \in S_{2m}} \indicator_{E(x_1,x_2;\pi(y^*)_1,\dots,\pi(y^*)_{m-1},\pi(y^*)_{m+1}, \dots, \pi(y^*)_{2m-1}) = \pi(y^*)}.
    \end{equation*}
\end{lemma}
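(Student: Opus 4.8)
\textbf{Proof proposal for Lemma~\ref{lemma:two_player_set_probability}.}

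The plan is to compute $\Pr[E(x_1,x_2) = y^*]$ by directly unpacking the encoder's randomness and then accounting for the fact that the two encoders output \emph{multisets}, so a single multiset $y^*$ may arise from many different orderings of the $2(m-1)$ uniform draws. Recall that the encoder for $x_1$ draws $y_1,\dots,y_{m-1}$ uniformly and independently from $\mathcal{Y}$ and sets $y_m = (\bar x_1 - \sum_{j<m} y_j) \bmod N$; similarly the encoder for $x_2$ draws $y_{m+1},\dots,y_{2m-1}$ uniformly and sets $y_{2m} = (\bar x_2 - \sum_{j=m+1}^{2m-1} y_j)\bmod N$. Thus the joint randomness is the vector $(y_1,\dots,y_{m-1},y_{m+1},\dots,y_{2m-1}) \in \mathcal{Y}^{2(m-1)}$, each coordinate uniform, and the full ordered $2m$-tuple output is the deterministic function $E(x_1,x_2;\,y_1,\dots,y_{m-1},y_{m+1},\dots,y_{2m-1})$.

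First I would write
\[
  \Pr[E(x_1,x_2) = y^*] \;=\; \frac{1}{|\mathcal{Y}|^{2(m-1)}} \sum_{(w_1,\dots,w_{m-1},w_{m+1},\dots,w_{2m-1})\in\mathcal{Y}^{2(m-1)}} \indicator\bigl[\overline{E}(x_1,x_2;w) = y^*\bigr],
\]
where $\overline{E}$ denotes the multiset of the $2m$-tuple produced by the encoders and $y^*$ is also viewed as a multiset (equivalently as an indexed multiset on $\{1,\dots,2m\}$). The key combinatorial step is to re-index the sum over $\mathcal{Y}^{2(m-1)}$: since all $2m$ elements of $y^*$ are relevant, a choice of the $2(m-1)$ free coordinates together with the deterministic completion yields the multiset $y^*$ exactly when the produced ordered tuple is some permutation $\pi(y^*)$ of $y^*$ \emph{and} the two completed coordinates (positions $m$ and $2m$) are consistent, i.e.\ the completion formula applied to $\pi(y^*)_1,\dots,\pi(y^*)_{m-1}$ really does give $\pi(y^*)_m$ and likewise at position $2m$. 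That consistency condition is exactly the indicator $\indicator_{E(x_1,x_2;\pi(y^*)_1,\dots,\pi(y^*)_{m-1},\pi(y^*)_{m+1},\dots,\pi(y^*)_{2m-1}) = \pi(y^*)}$ appearing in the statement. So the plan is to set up a bijection between $\{w \in \mathcal{Y}^{2(m-1)} : \overline{E}(x_1,x_2;w) = y^*\}$ and $\{\pi \in S_{2m} : E(x_1,x_2;\pi(y^*)_1,\dots) = \pi(y^*)\}$ — given a valid $w$, the ordered output tuple is some arrangement of the elements of $y^*$, which (using that we may treat $y^*$ as indexed) determines a permutation $\pi$; conversely each such $\pi$ determines $w$ by reading off the first $m-1$ and the next $m-1$ coordinates of $\pi(y^*)$.

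The main obstacle — and the place I would be most careful — is the bookkeeping around multisets versus indexed tuples, in particular whether the map $w \mapsto \pi$ is genuinely a bijection when $y^*$ has repeated values or when a permutation fixes the output tuple. The cleanest route is to follow the paper's convention of working with \emph{indexed} multisets on $\{1,\dots,2m\}$ throughout, so that "$\overline E(x_1,x_2;w)=y^*$" means there is a bijection of index sets, and a permutation $\pi\in S_{2m}$ records that bijection; then each $w$ in the left-hand set corresponds to exactly one $\pi$ (the one for which the ordered encoder output literally equals $\pi(y^*)$ coordinatewise), and distinct $w$ give distinct $\pi$ because $w$ is recoverable from $\pi(y^*)$. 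Summing the resulting indicators over $\pi \in S_{2m}$ and dividing by $|\mathcal{Y}|^{2(m-1)}$ gives the claimed identity. I do not expect any delicate estimates here — it is purely a counting/reindexing argument — but I would state the indexed-multiset conventions explicitly before starting to avoid ambiguity.
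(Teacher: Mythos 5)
Your proposal is correct and matches the paper's proof, which is exactly this unpack-and-reindex counting argument (the paper writes it as a chain of sum manipulations rather than an explicit bijection, but the content is the same). The concern you flag about repeated values in $y^*$ does not actually arise: $y^* \in \binom{\mathcal{Y}}{2m}$ denotes a size-$2m$ collection of \emph{distinct} elements of $\mathcal{Y}$, and the paper's proof opens by invoking this distinctness --- that, rather than the indexed-multiset convention, is what makes the correspondence $w \mapsto \pi$ a genuine bijection.
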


\begin{proof}[Proof of Lemma~\ref{lemma:two_player_set_probability}]
Using the fact that all the elements in $y^*$ are distinct, we have that
    \begin{align}
        \Pr[E(x_1, x_2) = y^*] &= \displaystyle\sum\limits_{\genfrac{}{}{0pt}{2}{y_1, \dots, y_{m-1},}{y_{m+1}, \dots, y_{2m-1} \in \mathcal{Y}}}
        \frac{1}{|\mathcal{Y}|^{2 (m-1)}} \cdot \indicator_{\overline{E}(x_1,x_2;y_1,\dots,y_{m-1},y_{m+1}, \dots, y_{2m-1}) = y^*}\nonumber\\ 
        &= \frac{1}{|\mathcal{Y}|^{2 (m-1)}} \cdot \displaystyle\sum\limits_{\genfrac{}{}{0pt}{2}{\text{distinct } y_1, \dots, y_{m-1}}{y_{m+1}, \dots, y_{2m-1} \in \mathcal{Y}}}
        \indicator_{\overline{E}(x_1,x_2;y_1,\dots,y_{m-1},y_{m+1}, \dots, y_{2m-1}) = y^*}\nonumber\\ 
        &= \frac{1}{|\mathcal{Y}|^{2 (m-1)}} \cdot \displaystyle\sum\limits_{\genfrac{}{}{0pt}{2}{\text{distinct } y_1, \dots, y_{m-1}}{y_{m+1}, \dots, y_{2m-1} \in \mathcal{Y}}} \displaystyle\sum\limits_{\pi \in S_{2m}} \indicator_{E(x_1,x_2;y_1,\dots,y_{m-1},y_{m+1}, \dots, y_{2m-1}) = \pi(y^*)}\nonumber\\ 
        &= \frac{1}{|\mathcal{Y}|^{2 (m-1)}} \cdot \displaystyle\sum\limits_{\pi \in S_{2m}}
        \displaystyle\sum\limits_{\genfrac{}{}{0pt}{2}{\text{distinct } y_1, \dots, y_{m-1}}{y_{m+1}, \dots, y_{2m-1} \in \mathcal{Y}}} \indicator_{E(x_1,x_2;y_1,\dots,y_{m-1},y_{m+1}, \dots, y_{2m-1}) = \pi(y^*)}\nonumber\\ 
        &= \frac{1}{|\mathcal{Y}|^{2 (m-1)}} \cdot \displaystyle\sum\limits_{\pi \in S_{2m}} \indicator_{E(x_1,x_2 ; \pi(y^*)_1,\dots,\pi(y^*)_{m-1}, \pi(y^*)_{m+1}, \dots, \pi(y^*)_{2m-1}) = \pi(y^*)}\nonumber
    \end{align}
\end{proof}
}
{}

\subsection{Analysis of Privacy under Sum-Preserving Changes}\label{sec:sum-preserving}

\begin{lemma}\label{lemma:pair_pointwise_DP}
    For any $y^* \in \binom{\mathcal{Y}}{2m}_{\gamma\textrm{-smooth}}$ and for all $x_1, x_2, x'_1, x'_2$ that are integer multiples of $1/k$ and that satisfy $x_1 + x_2 = x'_1 +x'_2$, it is the case that $\Pr[E(x_1, x_2) = y^*] \le \frac{1+\gamma}{1-\gamma} \cdot \Pr[E(x'_1, x'_2) = y^*]$.
\end{lemma}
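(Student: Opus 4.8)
The plan is to express both $\Pr[E(x_1,x_2)=y^*]$ and $\Pr[E(x_1',x_2')=y^*]$ using Lemma~\ref{lemma:two_player_set_probability}, and then exploit the fact that $y^*$ is $\gamma$-smooth to compare the two sums term-by-term, or rather group-by-group. Recall that Lemma~\ref{lemma:two_player_set_probability} writes $\Pr[E(x_1,x_2)=y^*] = \frac{1}{|\mathcal{Y}|^{2(m-1)}}\sum_{\pi\in S_{2m}} \indicator_{E(x_1,x_2;\pi(y^*)_1,\dots,\pi(y^*)_{2m-1})=\pi(y^*)}$. The key observation is that, for a fixed permutation $\pi$ of the $2m$ slots, the deterministic encoder with inputs $x_1,x_2$ produces $\pi(y^*)$ exactly when two conditions hold: the first $m-1$ free coordinates $\pi(y^*)_1,\dots,\pi(y^*)_{m-1}$ together with $\pi(y^*)_m$ sum to $x_1 k \bmod N$, and the coordinates $\pi(y^*)_{m+1},\dots,\pi(y^*)_{2m-1}$ together with $\pi(y^*)_{2m}$ sum to $x_2 k \bmod N$. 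In other words, the indicator is $1$ iff the bipartition of $\{1,\dots,2m\}$ induced by $\pi$ (first half / second half) splits $y^*$ into a part summing to $x_1 k$ and a part summing to $x_2 k$ modulo $N$.

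First I would make this precise: group the permutations $\pi\in S_{2m}$ according to the unordered set $I=\pi(\{1,\dots,m\})\in\mathcal{I}$. For each such $I$, the number of permutations with that image-of-first-half is exactly $(m!)^2$ if we also check that the "completing" coordinate falls in the right place — but actually I should be a little careful: the encoder distinguishes the $m$-th coordinate (the completion) from the first $m-1$, yet the count $\indicator_{E(\dots)=\pi(y^*)}$ only depends on which $m$ of the $2m$ values landed in the first half and which in the second, since the sum condition is symmetric in all $m$ coordinates of each half. So the total count of $\pi$ for which the indicator is $1$ equals $(m!)^2$ times $|\{I\in\mathcal{I} : \sum_{i\in I} y^*_i \equiv x_1 k, \ \sum_{i\notin I} y^*_i \equiv x_2 k \pmod N\}|$. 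Since $\sum_{i\in I} y^*_i + \sum_{i\notin I} y^*_i = \sum_{i=1}^{2m} y^*_i =: s$ is a fixed constant and $x_1 k + x_2 k$ need not equal $s \bmod N$ a priori — wait, here is where I'd use that $y^*$ arose as the union of valid encodings, so in fact we only care whether $\sum_{i\in I} y^*_i \equiv x_1 k \pmod N$, because then $\sum_{i\notin I} y^*_i \equiv s - x_1 k$, and this equals $x_2 k$ precisely when $s \equiv x_1 k + x_2 k \pmod N$. This last congruence is the same condition for $(x_1',x_2')$ since $x_1+x_2 = x_1'+x_2'$. So: if $s \not\equiv (x_1+x_2)k \pmod N$ both probabilities are zero and we are done; otherwise, $\Pr[E(x_1,x_2)=y^*] = \frac{(m!)^2}{|\mathcal{Y}|^{2(m-1)}} \cdot |\{I\in\mathcal{I}: X_I(y^*) = x_1 k\}|$, and similarly with $x_1'$.

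Now the smoothness hypothesis closes the argument immediately: by Definition~\ref{def:gamma_smooth}, $|\{I\in\mathcal{I}: X_I(y^*)=x\}| = |\mathcal{I}|\cdot \Pr_{I\in\mathcal{I}}[X_I(y^*)=x] \in |\mathcal{I}|\cdot[\frac{1-\gamma}{N},\frac{1+\gamma}{N}]$ for every $x$, in particular for $x = x_1 k$ and $x = x_1' k$. Taking the ratio of the two counts gives a factor in $[\frac{1-\gamma}{1+\gamma},\frac{1+\gamma}{1-\gamma}]$, and multiplying through by the common prefactor $\frac{(m!)^2}{|\mathcal{Y}|^{2(m-1)}}$ yields $\Pr[E(x_1,x_2)=y^*] \le \frac{1+\gamma}{1-\gamma}\Pr[E(x_1',x_2')=y^*]$, as claimed.

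I expect the main obstacle to be the bookkeeping in the middle step: correctly counting how many permutations $\pi\in S_{2m}$ realize a given split $I\in\mathcal{I}$ while respecting the asymmetric role of the completion coordinate $y_m$ (resp. $y_{2m}$) in Algorithm~\ref{alg:encoder}, and verifying that the indicator genuinely depends only on the unordered split and the sum condition $X_I(y^*)\equiv x_1 k \pmod N$ — not on which particular coordinate was "the last one." One subtlety to handle cleanly is that for the event to be realizable we need the reconstructed completion value $\bar x_1 - \sum_{j<m} y_j \bmod N$ to actually equal the prescribed coordinate, which is automatic once the full-half sum condition holds, so no extra constraint is lost. A second minor point is making sure the "both zero" degenerate case ($s\not\equiv(x_1+x_2)k$) is dispatched first so that the division by $\Pr[E(x_1',x_2')=y^*]$ in the statement is harmless; since the lemma is phrased as a one-sided inequality rather than a ratio, even $0\le\frac{1+\gamma}{1-\gamma}\cdot 0$ is fine.
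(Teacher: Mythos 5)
Your proposal is correct and follows essentially the same route as the paper's proof: both express $\Pr[E(x_1,x_2)=y^*]$ via Lemma~\ref{lemma:two_player_set_probability}, regroup permutations by the unordered $m$-subset they send the first half to (yielding the $(m!)^2$ factor and the count of subsets $I$ with $X_I(y^*)=x_1 k$, which is exactly the quantity the paper calls $B_{y^*,x_1}$), observe that the $x_2$-side condition is equivalent to the fixed congruence $\sum_i y^*_i \equiv (x_1+x_2)k \pmod N$, and finally apply $\gamma$-smoothness to bound the ratio of counts by $\frac{1+\gamma}{1-\gamma}$. The only difference is expository: you spell out the permutation-counting bookkeeping and the degenerate zero-probability case a bit more explicitly than the paper does.
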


\splitter{
\begin{proof}[Proof of Lemma~\ref{lemma:pair_pointwise_DP}]
     We denote by $\sum_i y^*_i := \sum_{i \in [2m]} y^*_i$ the sum of all elements in the set $y^*$. We define
\begin{equation}\label{eq:number_x_1_def_SPDP}
    B_{y^*, x_1} := \text{Number of subsets } S \text{ of } \{1, \dots, 2m\} \text{ of size }m \text{ for which } \sum_{i\in S} y^*_i = x_1 k \mod N.
\end{equation}
We similarly define $B_{y^*, x'_1}$ by replacing $x_1$ in~(\ref{eq:number_x_1_def_SPDP}) by $x'_1$.\\ 
Since $y^* \in \binom{\mathcal{Y}}{2m}$, Lemma~\ref{lemma:two_player_set_probability} implies that
    \begin{align}
         \Pr[E(x_1, x_2) = y^*] &= \frac{1}{|\mathcal{Y}|^{2 (m-1)}} \cdot \displaystyle\sum\limits_{\pi \in S_{2m}} \indicator_{E(x_1,x_2;\pi(y^*)_1,\dots,\pi(y^*)_{m-1}, \pi(y^*)_{m+1}, \dots, \pi(y^*)_{2m-1}) = \pi(y^*)}\nonumber\\ 
         &= \frac{(m!)^2}{|\mathcal{Y}|^{2 (m-1)}} \cdot B_{y^*, x_1} \cdot \indicator_{\sum_i y^*_i = x_1 k + x_2 k} .\label{eq:x_prob}
    \end{align}
    Similarly, we have that
    \begin{equation}\label{eq:x_primes_prob}
        \Pr[E(x'_1, x'_2) = y^*] = \frac{(m!)^2}{|\mathcal{Y}|^{2 (m-1)}} \cdot B_{y^*, x'_1} \cdot \indicator_{\sum_i y^*_i = x'_1 k + x'_2 k}.
    \end{equation}
    Since $y^*$ is $\gamma$-smooth, Definition~\ref{def:gamma_smooth} implies that \begin{equation}\label{eq:ratio_numbers_upper_bound_SPDP}
        \frac{B_{y^*, x_1}}{B_{y^*, x'_1}} \le \frac{1+\gamma}{1-\gamma}.
    \end{equation}
    By Equations~(\ref{eq:x_prob}) and~(\ref{eq:x_primes_prob}) and the assumption that $x_1+x_2 = x'_1+x'_2$ (as well as the assumption that $x_1, x_2, x'_1, x'_2$ are all integer multiples of $1/k$), we get that for every $\gamma$-smooth $y^*$ whose sum is \emph{not equal} to $x_1 k + x_2 k$, it is the case that
    \begin{equation}
        \Pr[E(x_1, x_2) = y^*] = \Pr[E(x'_1, x'_2) = y^*] = 0,
    \end{equation}
    and for every $\gamma$-smooth $y^*$ whose sum is \emph{equal} to $x_1 k + x_2 k$, the ratio of Equations~(\ref{eq:x_prob}) and~(\ref{eq:x_primes_prob}) along with~(\ref{eq:ratio_numbers_upper_bound_SPDP}) give that
    \begin{equation}
        \Pr[E(x_1, x_2) = y^*] \le \frac{1+\gamma}{1-\gamma} \cdot \Pr[E(x'_1, x'_2) = y^*].
    \end{equation}
\end{proof}
}
{The proof of Lemma~\ref{lemma:pair_pointwise_DP} is given in the supplementary material.}

\begin{lemma}\label{lemma:swap_dp}
 Suppose $x_1, x_2, \dots, x_n \in \R$ and $x_1', x_2', \dots, x_n' \in \R$ that are integer multiples of $1/k$ satisfying $x_i = x_i'$ for all $i\neq j_1, j_2$, where $1\leq j_1\neq j_2 \leq n$. Moreover, suppose that for any set $T$ consisting of multisets of $2m$ elements from $\mathcal{Y}$, we have \splitter{the following guarantee:
 \begin{equation}
  \Pr[E(x_{j_1}, x_{j_2}) \in T] \leq e^\epsilon \cdot \Pr[E(x_{j_1}', x_{j_2}') \in T] + \delta \label{eq:swapcond}
 \end{equation}
 }
 {that $\Pr[E(x_{j_1}, x_{j_2}) \in T] \leq e^\epsilon \cdot \Pr[E(x_{j_1}', x_{j_2}') \in T] + \delta$}
 for some $\epsilon, \delta > 0$. Then, it follows that for any set $S$ of multisets consisting of $mn$ elements from $\mathcal{Y}$,
 \[
  \Pr[E(x_1, x_2, \dots, x_n) \in S] \leq e^\epsilon \cdot \Pr[E(x_1', x_2', \dots, x_n') \in S] + \delta.
 \]
\end{lemma}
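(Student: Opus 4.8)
The plan is to reduce the $n$-player statement to the two-player hypothesis by conditioning on the messages contributed by the $n-2$ users whose inputs do not change, exploiting that all encoders use independent internal randomness. Concretely, I would write the output multiset as a disjoint indexed-multiset union $E(x_1,\dots,x_n) = V \cup W$, where $V := E(x_{j_1},x_{j_2})$ collects the $2m$ messages of users $j_1,j_2$ and $W := \bigcup_{i \notin \{j_1,j_2\}} E(x_i)$ collects the $m(n-2)$ messages of the remaining users. Since $x_i = x_i'$ for every $i \notin \{j_1,j_2\}$, the random multiset $W$ has the same distribution under both input vectors, and since the encoders use independent randomness, $W$ is independent of $V$ and of $V' := E(x_{j_1}',x_{j_2}')$. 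The index sets partition cleanly, so $V, V'$ range over $2m$-element multisets over $\mathcal{Y}$ and $W$ ranges over $m(n-2)$-element multisets over $\mathcal{Y}$.

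Given a target set $S$ of $mn$-element multisets over $\mathcal{Y}$, for each value $w$ of $W$ I would define the slice $T_w := \{\, v : v\cup w \in S\,\}$, which is a set of $2m$-element multisets over $\mathcal{Y}$ — exactly the type of set to which the hypothesis applies. Independence of $V$ and $W$ gives
\[
  \Pr[E(x_1,\dots,x_n)\in S] = \sum_w \Pr[W=w]\cdot\Pr[E(x_{j_1},x_{j_2})\in T_w] ,
\]
the sum being over the finitely many admissible multisets $w$. Applying the hypothesis with $T = T_w$ to each term, using $\sum_w \Pr[W=w] = 1$, and then recombining the sum in reverse with $V'$ in place of $V$ (valid because $W$ has the same law and is independent of $V'$) yields $\Pr[E(x_1,\dots,x_n)\in S] \le e^\epsilon\,\Pr[E(x_1',\dots,x_n')\in S] + \delta$, which is the claim.

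This is essentially a routine independence/post-processing manipulation, so I do not anticipate a genuine obstacle. The only place demanding care is the indexed-multiset bookkeeping: one must set up the union $V \cup W$ and the conditioning on $W = w$ so that $V$ and $W$ really are independent random multisets over disjoint index sets, and so that each slice $T_w$ is a bona fide set of $2m$-element multisets — otherwise the two-player hypothesis cannot be invoked verbatim. Note also that nothing here uses the specific structure of the encoder beyond independence across users, so the same reduction would work for any per-user local randomizer.
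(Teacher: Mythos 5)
Your proof is correct and takes essentially the same route as the paper: both condition on the messages of the $n-2$ unchanged users (using independence of the encoders), apply the two-player hypothesis to the resulting slice of $S$ (your $T_w$ is the paper's $\mathcal{R}_{S,A}$), and sum back using that the unchanged users' messages have the same law under both inputs. The only cosmetic difference is that the paper conditions on the tuple of per-user outputs $(y_3,\dots,y_n)$ rather than the merged multiset $W$, but since the slice depends only on $W$ the two are interchangeable.
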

\splitter{
\begin{proof}[Proof of Lemma~\ref{lemma:swap_dp}]
  Without loss of generality, assume $j_1=1$ and $j_2=2$ (by symmetry). Thus, $x_i = x_i'$ for $i=3,\dots,n$. For ease of notation, let $x = (x_1, x_2, \dots, x_n)$ and $x' = (x_1', x_2', \dots, x_n')$.
  
  Suppose $S$ is an arbitrary set of multisets of $mn$ elements from $\mathcal{Y}$. For any $A \subset \mathcal{Y}^m$, we let $\mathcal{R}_{S, A}$ denote
  \[
    \mathcal{R}_{S, A} = \bigcup_{T\in S} \left(T \setminus \bigcup_{a\in A} \left\{ a_1, a_2, \dots, a_m \right\}\right).
  \]
  
  Then, we observe that
  \begin{align}
      \Pr[E(x) \in S] &= \sum_{y_3, \dots, y_n \in \mathcal{Y}^m} \Pr[E(x) \in S \mid \forall i>2, E(x_i) = y_i] \cdot \prod_{j=3}^n \Pr[E(x_j) = y_j] \nonumber\\
      &= \sum_{y_3,\dots,y_n} \Pr\left[E(x_1, x_2) \in \mathcal{R}_{S, \{y_3, y_4, \dots, y_n\}} \right] \cdot \prod_{j=3}^n \Pr[E(x_j) = y_j] \nonumber\\
      &= \sum_{y_3,\dots, y_n} \left(e^\epsilon \cdot \Pr\left[E(x_1',x_2')\in \mathcal{R}_{S, \{y_3,y_4,\dots, y_n\}}\right] + \delta \right) \cdot \prod_{j=3}^n \Pr[E(x_j') = y_j] \label{eq:swapstep}\\
      &\leq e^\epsilon \cdot \Pr\left[E(x') \in S\right] + \delta \cdot \sum_{y_3,\dots,y_n \in \mathcal{Y}} \left( \prod_{j=3}^n \Pr[E(x_j') = y_j]\right) \nonumber\\
      &\leq e^\epsilon \cdot \Pr\left[E(x') \in S\right] + \delta, \nonumber
  \end{align}
  where \eqref{eq:swapstep} follows from \eqref{eq:swapcond} and the fact that $x_i = x_i'$ for $i=3,4,\dots,n$. This completes the proof.
\end{proof}
}
{The proof of Lemma~\ref{lemma:swap_dp} is also given in the supplementary material.}

\begin{lemma}\label{lemma:sumpreservingswap_dp}
 Suppose $x_1, x_2, \dots, x_n \in \R$ and $x_1', x_2', \dots, x_n' \in \R$ such that $x_{j_1} + x_{j_2} = x_{j_1}' + x_{j_2}'$ (each of these being an integer multiple of $1/k$) and $x_i = x_i'$ for all $i\neq j_1, j_2$, where $1\leq j_1\neq j_2 \leq n$. Then, for any set $S$ of multisets consisting of $mn$ elements from $\mathcal{Y}$, we have $\Pr[E(x_1, x_2, \dots, x_n) \in S] \leq \frac{1+\gamma}{1-\gamma} \cdot \Pr[E(x_1', x_2', \dots, x_n') \in S] + \eta$, where $\eta = \frac{2m^2}{N} + \frac{18\sqrt{m}N^2}{\gamma^2 2^{2m}}$ and $\gamma > \frac{6\sqrt{m}}{2^{2m}}$.
\end{lemma}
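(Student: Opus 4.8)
The plan is to combine the three ingredients already established: the smoothness guarantee (Lemma~\ref{lemma:gamma_smooth}), the pointwise two-player privacy bound on smooth sets (Lemma~\ref{lemma:pair_pointwise_DP}), and the reduction from $n$-player privacy to two-player privacy (Lemma~\ref{lemma:swap_dp}). The only real work is to package Lemmas~\ref{lemma:gamma_smooth} and~\ref{lemma:pair_pointwise_DP} into an $(\epsilon,\delta)$-style statement of the exact form required as the hypothesis~\eqref{eq:swapcond} of Lemma~\ref{lemma:swap_dp}, after which that lemma does the lifting for free.

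First I would fix $j_1,j_2$ and the pair $(x_{j_1},x_{j_2})$, $(x_{j_1}',x_{j_2}')$ with equal sums, and let $T$ be an arbitrary set of multisets of $2m$ elements of $\mathcal{Y}$. Split $T$ according to membership in $\smoothset$: write $\Pr[E(x_{j_1},x_{j_2})\in T] = \Pr[E(x_{j_1},x_{j_2})\in T\cap\smoothset] + \Pr[E(x_{j_1},x_{j_2})\in T\setminus\smoothset]$. The second term is at most $\Pr[E(x_{j_1},x_{j_2})\notin\smoothset]$, which by Lemma~\ref{lemma:gamma_smooth} is at most $\eta = \frac{2m^2}{N}+\frac{18\sqrt m\,N^2}{\gamma^2 2^{2m}}$, using the hypothesis $\gamma > 6\sqrt m/2^{2m}$. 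For the first term, sum the pointwise bound of Lemma~\ref{lemma:pair_pointwise_DP} over all $y^*\in T\cap\smoothset$: since each such $y^*$ is $\gamma$-smooth and has $2m$ distinct elements, $\Pr[E(x_{j_1},x_{j_2})=y^*]\le\frac{1+\gamma}{1-\gamma}\Pr[E(x_{j_1}',x_{j_2}')=y^*]$, and summing gives $\Pr[E(x_{j_1},x_{j_2})\in T\cap\smoothset]\le\frac{1+\gamma}{1-\gamma}\Pr[E(x_{j_1}',x_{j_2}')\in T\cap\smoothset]\le\frac{1+\gamma}{1-\gamma}\Pr[E(x_{j_1}',x_{j_2}')\in T]$. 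Adding the two bounds yields
\[
\Pr[E(x_{j_1},x_{j_2})\in T] \le \tfrac{1+\gamma}{1-\gamma}\,\Pr[E(x_{j_1}',x_{j_2}')\in T] + \eta,
\]
which is exactly~\eqref{eq:swapcond} with $e^\epsilon$ replaced by $\frac{1+\gamma}{1-\gamma}$ and $\delta$ replaced by $\eta$.

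Finally I would invoke Lemma~\ref{lemma:swap_dp} with this multiplicative factor $\frac{1+\gamma}{1-\gamma}$ in place of $e^\epsilon$ and additive factor $\eta$ in place of $\delta$ — inspecting its proof, nothing uses that the factor is literally an exponential, so the conclusion goes through verbatim — to obtain $\Pr[E(x_1,\dots,x_n)\in S]\le\frac{1+\gamma}{1-\gamma}\Pr[E(x_1',\dots,x_n')\in S]+\eta$ for every $S$, as claimed. I do not anticipate a genuine obstacle here; the one point deserving care is the bookkeeping in the set-splitting step — making sure $T\setminus\smoothset$ is bounded using the fixed pair $(x_{j_1},x_{j_2})$ (Lemma~\ref{lemma:gamma_smooth} holds for every choice of inputs, so this is fine) and that the pointwise bound is only applied to genuinely smooth $y^*$ with distinct coordinates, which is precisely the membership condition defining $\smoothset$.
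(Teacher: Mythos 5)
Your proposal is correct and follows essentially the same route as the paper's proof: decompose the event over $T$ into the smooth and non-smooth parts, bound the non-smooth part by $\eta$ via Lemma~\ref{lemma:gamma_smooth}, bound the smooth part pointwise via Lemma~\ref{lemma:pair_pointwise_DP}, and lift from two players to $n$ via Lemma~\ref{lemma:swap_dp}. Your remark that Lemma~\ref{lemma:swap_dp} applies with any multiplicative constant in place of $e^\epsilon$ is a correct observation that the paper leaves implicit.
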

\splitter{
\begin{proof}[Proof of Lemma~\ref{lemma:sumpreservingswap_dp}]
Without loss of generality, let $j_1=1$ and $j_2=2$. We now consider any set $T$ of multisets of $2m$ elements from $\mathcal{Y}$. Observe that
  \begin{align}
      \Pr[E(x_1, x_2) \in T] &\leq \Pr\left[E(x_1,x_2) \not\in \smoothset\right] + \Pr\left[E(x_1,x_2) \in T \cap \smoothset\right] \nonumber\\
      &\leq \eta + \sum_{A\in T\cap\smoothset} \Pr[E(x_1,x_2)=A] \label{eq:nonsmooth}\\
      &\leq \eta + \sum_{A\in T\cap\smoothset} \frac{1+\gamma}{1-\gamma} \cdot \Pr[E(x_1', x_2')=A] \label{eq:twodp}\\
      &\leq \eta + \frac{1+\gamma}{1-\gamma} \cdot \Pr[E(x_1', x_2') \in T], \nonumber
  \end{align}
  where \eqref{eq:nonsmooth} and \eqref{eq:twodp} follow from Lemma~\ref{lemma:gamma_smooth} and Lemma~\ref{lemma:pair_pointwise_DP}, respectively. The desired result now follows from a direct application of Lemma~\ref{lemma:swap_dp}.
\end{proof}
}
{The proof of Lemma~\ref{lemma:sumpreservingswap_dp} is given in the supplementary material.}
Using Lemma~\ref{lemma:sumpreservingswap_dp} as a building block for analyzing differential privacy guarantees in the context of sum-preserving \emph{swaps}, we can derive a differential privacy result with respect to general sum-preserving \emph{changes}.
\begin{lemma}\label{lemma:sum_swap_series}
  Suppose $x = (x_1, x_2, \dots, x_n)$ and $x' = (x_1', x_2', \dots, x_n')$ have coordinates that are integer multiples of $1/k$ satisfying $x_1 + x_2 + \cdots + x_n = x_1' + x_2' + \cdots + x_n'$ and $x'$ can be obtained from $x$ by a series of $t$ sum-preserving swaps. Then, for any $S$, we have $\Pr[E(x_1',x_2',\dots,x_n') \in S] \leq \beta^t \Pr[E(x_1,x_2,\dots,x_n) \in S] + \eta \cdot \frac{\beta^t-1}{\beta-1}$, where $\eta = \frac{2m^2}{N} + \frac{18\sqrt{m}N^2}{\gamma^2 2^{2m}}$, $\gamma > \frac{6\sqrt{m}}{2^{2m}}$, and $\beta = \frac{1+\gamma}{1-\gamma}$.
\end{lemma}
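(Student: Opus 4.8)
The plan is to prove the bound by induction on the number $t$ of sum-preserving swaps, with Lemma~\ref{lemma:sumpreservingswap_dp} serving as the single-step estimate. First I would fix a chain $x = x^{(0)}, x^{(1)}, \dots, x^{(t)} = x'$ witnessing the hypothesis, so that for each $\ell \in \{1,\dots,t\}$ the vector $x^{(\ell)}$ is obtained from $x^{(\ell-1)}$ by one sum-preserving swap. Every $x^{(\ell)}$ still has all coordinates equal to integer multiples of $1/k$: a sum-preserving swap replaces one pair $(x^{(\ell-1)}_{j_1}, x^{(\ell-1)}_{j_2})$ of multiples of $1/k$ by a pair $(x^{(\ell)}_{j_1}, x^{(\ell)}_{j_2})$ of multiples of $1/k$ with the same sum, and leaves all other coordinates unchanged, so the precondition of Lemma~\ref{lemma:sumpreservingswap_dp} is met at every step of the chain.

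Next I would apply Lemma~\ref{lemma:sumpreservingswap_dp} along the chain. The relation ``differ by a single sum-preserving swap'' is symmetric, so for each $\ell$ we may apply that lemma with its $(x_1,\dots,x_n)$ instantiated as $x^{(\ell)}$ and its $(x_1',\dots,x_n')$ instantiated as $x^{(\ell-1)}$, obtaining for every $S$
\[
  \Pr[E(x^{(\ell)}) \in S] \;\leq\; \beta \cdot \Pr[E(x^{(\ell-1)}) \in S] + \eta,
\]
with $\beta = \frac{1+\gamma}{1-\gamma}$ and $\eta = \frac{2m^2}{N} + \frac{18\sqrt{m}N^2}{\gamma^2 2^{2m}}$ exactly as in the statement; the constraint $\gamma > 6\sqrt m/2^{2m}$ is inherited, and $\gamma \in (0,1)$ gives $\beta > 1$, so $\frac{\beta^t-1}{\beta-1}$ is well defined.

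Finally I would unfold the recursion. Writing $p_\ell = \Pr[E(x^{(\ell)}) \in S]$, iterating $p_\ell \le \beta p_{\ell-1} + \eta$ for $\ell = 1,\dots,t$ yields
\[
  p_t \;\leq\; \beta^t p_0 + \eta\sum_{\ell=0}^{t-1}\beta^\ell \;=\; \beta^t \Pr[E(x) \in S] + \eta\cdot\frac{\beta^t-1}{\beta-1},
\]
which is the claimed inequality (the case $t=0$ holds with equality, since then $x'=x$ and $\frac{\beta^0-1}{\beta-1}=0$). I do not anticipate any genuine obstacle: this is a routine ``group privacy''-style composition along a path. The only two points that each deserve an explicit sentence are (i) checking that the intermediate configurations $x^{(\ell)}$ remain in the lattice $\frac{1}{k}\Z$ so that Lemma~\ref{lemma:sumpreservingswap_dp} is applicable at every step, and (ii) invoking symmetry of the swap relation, which is what lets us put $x'$ (rather than $x$) on the left-hand side as the lemma requires.
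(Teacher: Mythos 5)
Your proof is correct and follows essentially the same approach as the paper: both arguments proceed step-by-step along the chain of $t$ sum-preserving swaps, apply Lemma~\ref{lemma:sumpreservingswap_dp} to each adjacent pair to get the one-step bound $\Pr[E(x^{(\ell)}) \in S] \le \beta\Pr[E(x^{(\ell-1)}) \in S] + \eta$, and then compose these bounds (the paper via induction on $t$, you by directly unrolling the recursion, which is equivalent). Your explicit remarks about the symmetry of the single-swap relation (needed to get the inequality facing the right way) and about intermediate configurations remaining in $\frac{1}{k}\Z$ are small points the paper leaves implicit, and are sound.
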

\splitter{
\begin{proof}[Proof of Lemma~\ref{lemma:sum_swap_series}]
  We prove the lemma by induction on $t$. Note that the case $t=1$ holds by Lemma~\ref{lemma:sumpreservingswap_dp}.
  
  Now, for the inductive step, suppose the lemma holds for $t = r$. We wish to show that it also holds for $t=r+1$. Note that there exists some $x'' \in \mathcal{Y}^n$ such that (1) $x''$ can be obtained from $x$ by a series of $r$ sum-preserving swaps and (2) $x'$ can be obtained from $x''$ by a single sum-preserving swap. By the inductive hypothesis, we have that
  \begin{equation}
    \Pr[E(x_1'', x_2'', \dots, x_n'') \in S] \leq \beta^r \Pr[E(x_1,x_2,\dots,x_n) \in S] + \eta\cdot\frac{\beta^r-1}{\beta - 1}. \label{eq:indhyp}
  \end{equation}
  Moreover, by Lemma~\ref{lemma:swap_dp}, we have that
  \begin{equation}
    \Pr[E(x_1', x_2', \dots, x_n') \in S] \leq \beta \Pr[E(x_1'',x_2'',\dots,x_n'') \in S] + \eta. \label{eq:indstep}
  \end{equation}
  Combining \eqref{eq:indhyp} and \eqref{eq:indstep}, we note that
  \begin{align*}
    \Pr[E(x_1', x_2', \dots, x_n') \in S] &\leq \beta \left( \beta^r \Pr[E(x_1,x_2,\dots,x_n) \in S] + \eta\cdot\frac{\beta^r-1}{\beta - 1} \right) + \eta \\
    &\leq \beta^{r+1} \Pr[E(x_1,x_2,\dots,x_n) \in S] + \eta\cdot\frac{\beta^{r+1} - 1}{\beta - 1},
  \end{align*}
  which establishes the claim for $t=r+1$.
\end{proof}
}
{The proof of Lemma~\ref{lemma:sum_swap_series} appears in the supplementary material.}
\splitter{
As a consequence, we obtain the following main theorem establishing differential privacy of Algorithm~\ref{alg:encoder} with respect to sum-preserving changes in the shuffled model:

\mainsumdp*
}
{We are now ready to prove Theorem~\ref{theorem:main_sum_DP}.}
\begin{proof}[Proof of Theorem~\ref{theorem:main_sum_DP}]

In Algorithm~\ref{alg:encoder}, each user communicates at most $O(m \log{N})$ bits which are sent via $m$ messages. Note that if $x = (x_1, \dots, x_n)$ and $x' = (x_1', \dots, x_n')$ have coordinates that are integer multiples of $1/k$ satisfying $x_1 + \cdots + x_n = x_1' + \cdots + x_n'$, then there is a sequence of $t \leq n-1$ sum-preserving swaps that allows us to transform $x$ into $x'$. Thus, Lemma~\ref{lemma:sum_swap_series} implies that Algorithm~\ref{alg:encoder} is $(\epsilon, \delta)$-differentially private with respect to sum-preserving changes if $\frac{(1+\gamma)^{n-1}}{(1-\gamma)^{n-1}} \le e^{\epsilon}$, and $\frac{2m^2}{N} + \frac{18\sqrt{m}\,N^2}{\gamma^2 2^{2m}} \le \delta$, for any $\gamma > \frac{6\sqrt{m}}{2^{2m}}$ and $m\geq 4$. The error in our final estimate (which is due to rounding) is $O(n/k)$ in the worst case. The theorem now follows by choosing $m > 10\log\bigg(\frac{nk}{\epsilon \delta}\bigg)$, $\gamma = \frac{\epsilon}{10 n}$, $k = 10n$ and $N$ being the first odd integer larger than $3kn+\frac{10}{\delta} + \frac{10}{\epsilon}$.
\end{proof}

\subsection{Analysis of Privacy under Single-User Changes}\label{sec:single-user}

The main idea is to run Algorithm~\ref{alg:encoder} after having each player add some noise to her input, with some fixed probability independently from the other players. We need the noise distribution to satisfy three properties: it should be supported on a finite interval, the logarithm of its probability mass function should have a small Lipschitz-constant (even under modular arithmetic) and its variance should be small. The following truncated version of the discrete Laplace distribution satisfies all three properties.

\begin{definition}[Truncated Discrete Laplace Distribution]
    Let $N$ be a positive odd integer and $p \in (0,1)$. The probability mass function of the truncated discrete Laplace distribution $\D_{N,p}$ is defined by
  \begin{equation}\label{eq::noise_distribution}
  \D_{N, p}[k] = \frac{(1-p)\cdot p^{|k|}}{1+p-2p^{\frac{N+1}{2}}} \end{equation}
    for every integer $k$ in the range $\{-\frac{(N-1)}{2}, \dots, +\frac{(N-1)}{2}\}$.
\end{definition}
    
\begin{lemma}[Log-Lipschitzness]\label{lemma:noise_distribution_ratio}
 Let $N$ be a positive odd integer and $p \in (0,1)$ a real number. Define the interval $I = \{-\frac{(N-1)}{2}, \dots, +\frac{(N-1)}{2}\}$. For all $k \in \{0,\dots, N-1\}$ and all $t \in I$, it is the case that $p^{|t|} \le \frac{\D_{N, p}[(k+t) \mod I]}{\D_{N, p}[k \mod I]} \le p^{-|t|}$.
\end{lemma}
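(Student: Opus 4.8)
The plan is to prove the two inequalities of Lemma~\ref{lemma:noise_distribution_ratio} by direct manipulation of the probability mass function in~\eqref{eq::noise_distribution}, after carefully unpacking what the notation $(k+t)\mod I$ means. Since $k \in \{0,\dots,N-1\}$ and $t \in I$, both arguments should be understood as reduced modulo $N$ into the symmetric interval $I = \{-\frac{N-1}{2},\dots,\frac{N-1}{2}\}$; write $a = k \mod I$ and $b = (k+t)\mod I$ for these representatives. The key observation is that the normalizing constant $1+p-2p^{(N+1)/2}$ in the denominator of~\eqref{eq::noise_distribution} is the same for every argument, so it cancels in the ratio, and we are left with $\frac{\D_{N,p}[b]}{\D_{N,p}[a]} = p^{|b|-|a|}$. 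Thus the lemma reduces entirely to the claim that $-|t| \le |b| - |a| \le |t|$, i.e., that reducing modulo $N$ into $I$ changes the absolute value by at most $|t|$.

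First I would establish the arithmetic fact $\big| \, |b| - |a| \, \big| \le |t|$. The cleanest route is to note that $b \equiv a + t \pmod N$ and $b, a+t$ both... well, $a+t$ need not lie in $I$, so $b = a+t$ or $b = a+t \pm N$. In the first case $|b| - |a| \le |t|$ by the triangle inequality, and symmetrically $|a|-|b| \le |t|$, so we are done. In the case $b = a+t-N$ (the case $b=a+t+N$ is symmetric), we have $a + t > \frac{N-1}{2}$, so since $a \le \frac{N-1}{2}$ we get $t > 0$, and $a = b - t + N$. Here $b \le \frac{N-1}{2}$ and $b = a+t-N \ge -\frac{N-1}{2} + 0 - \dots$; the point is that $|b| = N - (a+t)$ while $|a| = a$ (need $a \ge 0$, which need not hold) — so I should instead argue via the fact that $|b|$ equals the distance from $b$ to $0$ on the cycle $\Z/N\Z$ with the natural metric, and $a \mapsto a+t$ moves any point by a cyclic distance at most $|t|$ (since $|t| \le \frac{N-1}{2}$), hence $\big| \mathrm{dist}(b,0) - \mathrm{dist}(a,0) \big| \le \mathrm{dist}(b,a) \le |t|$ by the triangle inequality for the cyclic metric, and $\mathrm{dist}(x,0) = |x \mod I|$. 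This is the cleanest formulation and I would present it that way.

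Given $\big| |b|-|a| \big| \le |t|$, the lemma follows: since $p \in (0,1)$, the function $s \mapsto p^s$ is decreasing, so $p^{|t|} \le p^{|b|-|a|} \le p^{-|t|}$, which is exactly the asserted two-sided bound on $\frac{\D_{N,p}[(k+t)\mod I]}{\D_{N,p}[k \mod I]}$. The main (minor) obstacle is purely notational: being precise about the reduction $\mod I$ and verifying that the cyclic-distance interpretation $|x \mod I| = \min_j |x - jN|$ is correct, including the edge behavior at $\pm\frac{N-1}{2}$ (which is unambiguous since $N$ is odd). There is no real analytic difficulty here — everything is elementary once the triangle inequality for the metric on $\Z/N\Z$ is invoked.
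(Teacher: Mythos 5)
Your proof is correct and takes a genuinely different route from the paper. The paper proves Lemma~\ref{lemma:noise_distribution_ratio} by a six-way case analysis on the positions of $k$ and $k+t$ relative to $\frac{N \pm 1}{2}$, explicitly computing $|k \bmod I|$ and $|(k+t) \bmod I|$ in each case, forming the ratio, and checking the bound. You instead observe that the normalizing constant cancels so the ratio is exactly $p^{|b|-|a|}$ with $a = k \bmod I$, $b = (k+t)\bmod I$, and then note that $|x \bmod I|$ is the cyclic distance $d(x,0)$ on $\Z/N\Z$, so the reverse triangle inequality gives $\bigl|\,|b|-|a|\,\bigr| = |d(k+t,0)-d(k,0)| \le d(k+t,k) = |t|$ (the last equality using $|t| \le \frac{N-1}{2}$). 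Monotonicity of $s \mapsto p^s$ for $p\in(0,1)$ finishes it. This is more compact and conceptually transparent than the paper's case-bash; it also makes clear that oddness of $N$ is used precisely to make the representative in $I$ unique and the identification $|x\bmod I| = d(x,0)$ exact. One presentational note: your first attempt at a direct case split (the $b=a+t-N$ paragraph) visibly stalls before you pivot to the metric argument — in a final write-up you should drop that false start and lead with the metric formulation, which is self-contained.
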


\splitter{
\begin{proof}[Proof of Lemma~\ref{lemma:noise_distribution_ratio}]

We start by noting that (\ref{eq::noise_distribution}) implies that
    \begin{equation}\label{eq:general_ratio}
        \frac{\D_{N, p}[(k+t) \mod I]}{\D_{N, p}[k \mod I]} = \frac{p^{|(k+t) \mod I|} }{p^{|k \mod I|} }.
    \end{equation}

We distinguish six cases depending on the values of $k$ and $k+t$:
    \begin{align}
        &\text{Case }1:~ 0 \le k \le \frac{N-1}{2} ~  \text{ and } ~ -\frac{(N-1)}{2} \le k+t \le -1.\label{eq:case_1}\\ 
        &\text{Case }2:~ 0 \le k \le \frac{N-1}{2} ~  \text{ and } ~ 0 \le k+t \le \frac{N-1}{2}.\label{eq:case_2}\\ 
        &\text{Case }3:~ 0 \le k \le \frac{N-1}{2} ~  \text{ and } ~ \frac{N+1}{2} \le k+t \le N -1.\label{eq:case_3}\\ 
        &\text{Case }4:~ \frac{N+1}{2} \le k \le N-1 ~  \text{ and } ~ 1 \le k+t \le \frac{N-1}{2}.\label{eq:case_4}\\ 
        &\text{Case }5:~ \frac{N+1}{2} \le k \le N-1 ~  \text{ and } ~ \frac{N+1}{2} \le k+t \le N -1.\label{eq:case_5}\\ 
        &\text{Case }6:~ \frac{N+1}{2} \le k \le N-1 ~  \text{ and } ~ N \le k+t \le N-1+\frac{N-1}{2}.\label{eq:case_6}
    \end{align}
    In Cases $1$, $2$ and $3$, we have that $0 \le k \le \frac{N-1}{2}$ which implies that $|k \mod I| = k$ and hence the denominator in~(\ref{eq:general_ratio}) satisfies
    \begin{equation}\label{eq:denominator_cases_1_2_3}
        p^{|k \mod I|} = p^{k}.
    \end{equation}
    Plugging~(\ref{eq:denominator_cases_1_2_3}) in~(\ref{eq:general_ratio}), we get
    \begin{equation}\label{eq:ratio_cases_1_2_3}
        \frac{\D_{N, p}[(k+t) \mod I]}{\D_{N, p}[k \mod I]} = \frac{p^{|(k+t) \mod I|} }{p^{k} }.
    \end{equation}    
    We now separately examine each of these three cases.
    \paragraph{Case $\pmb{1}$.}
    If~(\ref{eq:case_1}) holds, then $|(k+t) \mod I| = -k-t$ and the numerator in~(\ref{eq:ratio_cases_1_2_3}) becomes
    \begin{equation}\label{eq:numerator_case_1}
        p^{|(k+t) \mod I|}= p^{-k-t}.
    \end{equation}
    Plugging~(\ref{eq:numerator_case_1}) in~(\ref{eq:ratio_cases_1_2_3}), we get
    \begin{equation}\label{eq:simplified_ratio_case_1}
         \frac{\D_{N, p}[(k+t) \mod I]}{\D_{N, p}[k \mod I]} = p^{-2k-t}.
     \end{equation}
     Using the facts that $k+t < 0$ and $k \geq 0$, and thus that $t<0$, we get that the quantity in~(\ref{eq:simplified_ratio_case_1}) is at most $p^{-|t|}$ and at least $p^{|t|}$.
     
    \paragraph{Case $\pmb{2}$.}
    If~(\ref{eq:case_2}) holds, then $|(k+t) \mod I| = k+t$ and the numerator in~(\ref{eq:ratio_cases_1_2_3}) becomes
    \begin{equation}\label{eq:numerator_case_2}
        p^{|(k+t) \mod I|}= p^{k+t}.
    \end{equation}
    Plugging~(\ref{eq:numerator_case_2}) in~(\ref{eq:ratio_cases_1_2_3}), we get
    \begin{equation*}
         \frac{\D_{N, p}[(k+t) \mod I]}{\D_{N, p}[k \mod I]} = p^t.
     \end{equation*}
    
    \paragraph{Case $\pmb{3}$.}
    If~(\ref{eq:case_3}) holds, then $|(k+t) \mod I| = N-k-t$ and the numerator in~(\ref{eq:ratio_cases_1_2_3}) becomes
    \begin{equation}\label{eq:numerator_case_3}
        p^{|(k+t) \mod I|}= p^{N-k-t}.
    \end{equation}
    Plugging~(\ref{eq:numerator_case_3}) in~(\ref{eq:ratio_cases_1_2_3}), we get
    \begin{equation}\label{eq:simplified_ratio_case_3}
         \frac{\D_{N, p}[(k+t) \mod I]}{\D_{N, p}[k \mod I]} = p^{N-2k-t}.
     \end{equation}
     Using the fact that $k+t \geq \frac{N+1}{2}$ which, along with the fact that $k \le \frac{N-1}{2}$, implies that $t>0$, we get that the quantity in~(\ref{eq:simplified_ratio_case_3}) is at most $p^{-|t|}$ and at least $p^{|t|}$.\\ 
     
    We now turn to Cases $4$, $5$ and $6$. In these, $\frac{N+1}{2} \le k \le N-1$, which implies that $|k \mod I| = N-k$ and hence the denominator in~(\ref{eq:general_ratio}) satisfies
    \begin{equation}\label{eq:denominator_cases_4_5_6}
        p^{|k \mod I|} = p^{N-k}.
    \end{equation}
    Plugging~(\ref{eq:denominator_cases_4_5_6}) in~(\ref{eq:general_ratio}), we get
    \begin{equation}\label{eq:ratio_cases_4_5_6}
        \frac{\D_{N, p}[(k+t) \mod I]}{\D_{N, p}[k \mod I]} = \frac{p^{|(k+t) \mod I|} }{p^{N-k} }.
    \end{equation}    
    We now separately examine each of these three cases.
    
    \paragraph{Case $\pmb{4}$.}
    If~(\ref{eq:case_4}) holds, then $|(k+t) \mod I| = k+t$ and the numerator in~(\ref{eq:ratio_cases_4_5_6}) becomes
    \begin{equation}\label{eq:numerator_case_4}
        p^{|(k+t) \mod I|}= p^{k+t}.
    \end{equation}
    Plugging~(\ref{eq:numerator_case_4}) in~(\ref{eq:ratio_cases_4_5_6}), we get
    \begin{equation}\label{eq:simplified_ratio_case_4}
         \frac{\D_{N, p}[(k+t) \mod I]}{\D_{N, p}[k \mod I]} = p^{2k+t-N}.
     \end{equation}
     Using the facts that $k+t \le \frac{N-1}{2}$ and $k \geq \frac{N+1}{2}$, we deduce that $t<0$ and that the quantity in~(\ref{eq:simplified_ratio_case_4}) is at most $p^{-|t|}$ and at least $p^{|t|}$.
     
    \paragraph{Case $\pmb{5}$.}
    If~(\ref{eq:case_5}) holds, then $|(k+t) \mod I| = N-k-t$ and the numerator in~(\ref{eq:ratio_cases_4_5_6}) becomes
    \begin{equation}\label{eq:numerator_case_5}
        p^{|(k+t) \mod I|}= p^{N-k-t}.
    \end{equation}
    Plugging~(\ref{eq:numerator_case_5}) in~(\ref{eq:ratio_cases_4_5_6}), we get
    \begin{equation*}
         \frac{\D_{N, p}[(k+t) \mod I]}{\D_{N, p}[k \mod I]} = p^{-t}.
     \end{equation*}     
     
    \paragraph{Case $\pmb{6}$.}
    If~(\ref{eq:case_6}) holds, then $|(k+t) \mod I| = k+t-N$ and the numerator in~(\ref{eq:ratio_cases_4_5_6}) becomes
    \begin{equation}\label{eq:numerator_case_6}
        p^{|(k+t) \mod I|}= p^{k+t-N}.
    \end{equation}
    Plugging~(\ref{eq:numerator_case_6}) in~(\ref{eq:ratio_cases_4_5_6}), we get
    \begin{equation}\label{eq:simplified_ratio_case_6}
         \frac{\D_{N, p}[(k+t) \mod I]}{\D_{N, p}[k \mod I]} = p^{2k+t-2N}.
     \end{equation}
     Using the facts that $k < N$ and $k+t \geq N$, we get that $t>0$ and that the quantity in~(\ref{eq:simplified_ratio_case_6}) is at most $p^{-|t|}$ and at least $p^{|t|}$.
\end{proof}
}
{The proof of Lemma~\ref{lemma:noise_distribution_ratio} appears in the supplementary material.}

\begin{lemma}\label{lemma:distribution_mean_variance}
 Let $N$ be a positive odd integer and $p \in (0,1)$ a real number. Let $X$ be a random variable drawn from the truncated discrete Laplace distribution $\D_{N,p}$. Then, the mean and variance of $X$ satisfy $\E[X] = 0$ and $\Var[X] \le \frac{2 p (1+p)}{(1-p)^2 (1+p-2p^{(N+1)/2})}$.
\end{lemma}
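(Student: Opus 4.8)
The plan is to exploit the symmetry of $\D_{N,p}$ and then bound a truncated power series by its infinite counterpart. First I would observe that the probability mass function in~(\ref{eq::noise_distribution}) depends on $k$ only through $|k|$, and that the support $\{-\tfrac{N-1}{2},\dots,\tfrac{N-1}{2}\}$ is symmetric about $0$; hence $\D_{N,p}[k] = \D_{N,p}[-k]$ for every $k$ in the support. This immediately gives $\E[X] = \sum_k k\,\D_{N,p}[k] = 0$, since the contributions of $k$ and $-k$ cancel in pairs and the $k=0$ term is zero.

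Next, since $\E[X]=0$ we have $\Var[X] = \E[X^2]$. Writing out the definition and again using that $k\mapsto \D_{N,p}[k]$ is even, I would get
\[
  \Var[X] = \sum_{k=-(N-1)/2}^{(N-1)/2} k^2\,\D_{N,p}[k]
  = \frac{2(1-p)}{1+p-2p^{(N+1)/2}} \sum_{k=1}^{(N-1)/2} k^2 p^{k},
\]
where the $k=0$ term drops out and the two symmetric halves of the sum are equal, producing the factor $2$.

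The final step is to upper bound the truncated sum by the full series: since all terms are nonnegative,
\[
  \sum_{k=1}^{(N-1)/2} k^2 p^{k} \;\le\; \sum_{k=1}^{\infty} k^2 p^{k} \;=\; \frac{p(1+p)}{(1-p)^3},
\]
the last identity being the standard evaluation of $\sum_{k\ge 1} k^2 p^k$ (obtained, e.g., by differentiating the geometric series $\sum_{k\ge0} p^k = (1-p)^{-1}$ twice). Substituting this bound and cancelling one factor of $(1-p)$ yields exactly
$\Var[X] \le \frac{2p(1+p)}{(1-p)^2\,(1+p-2p^{(N+1)/2})}$, as claimed.

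There is no substantial obstacle here: the mean is pure symmetry, and the variance bound reduces to a classical power-series identity together with the trivial fact that truncating a nonnegative series can only decrease it. The only point requiring minor care is that the normalizing constant $1+p-2p^{(N+1)/2}$ is positive — which holds because $p\in(0,1)$ forces $2p^{(N+1)/2} < 2p < 1+p$ — so that dividing by it preserves the direction of the inequality.
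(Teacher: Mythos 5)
Your proof is correct and follows essentially the same route as the paper's: symmetry for the mean, reduction of $\Var[X]=\E[X^2]$ to a one-sided sum, bounding the truncated series by the infinite one, and the classical identity $\sum_{k\ge1}k^2p^k = p(1+p)/(1-p)^3$. In fact your intermediate normalization $\frac{2(1-p)}{1+p-2p^{(N+1)/2}}$ is the correct one — the paper's displayed intermediate step carries a spurious extra factor of $(1+p)$ in the denominator, though the final bound it states agrees with yours — and your remark that $1+p-2p^{(N+1)/2}>0$ is a small but welcome sanity check the paper omits.
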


\splitter{
In order to prove Lemma~\ref{lemma:distribution_mean_variance}, we will need the simple fact given in Lemma~\ref{lemma:second_moment_geometric}.
\begin{lemma}\label{lemma:second_moment_geometric}
For any $p \in [0,1)$, it is the case that $\displaystyle\sum\limits_{k=1}^{\infty} k^2 p^{k} = \frac{p(1+p)}{(1-p)^3}$.
\end{lemma}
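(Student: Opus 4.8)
The plan is to derive the identity from the geometric series by differentiating twice. First, if $p=0$ both sides vanish, so we may assume $p \in (0,1)$. Recall the elementary identity $\sum_{k=0}^{\infty} p^k = \frac{1}{1-p}$, valid for $|p|<1$, where the convergence is uniform on every compact subset of $(-1,1)$; this justifies differentiating the series term by term inside the radius of convergence.

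Differentiating once gives $\sum_{k=1}^{\infty} k p^{k-1} = \frac{1}{(1-p)^2}$, and hence $\sum_{k=1}^{\infty} k p^{k} = \frac{p}{(1-p)^2}$. Differentiating this last identity term by term (again legitimate within the radius of convergence) yields
\[
  \sum_{k=1}^{\infty} k^2 p^{k-1} = \frac{d}{dp}\left(\frac{p}{(1-p)^2}\right) = \frac{(1-p)^2 + 2p(1-p)}{(1-p)^4} = \frac{1+p}{(1-p)^3}.
\]
Multiplying both sides by $p$ gives $\sum_{k=1}^{\infty} k^2 p^{k} = \frac{p(1+p)}{(1-p)^3}$, which is the claimed formula.

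The only point requiring any care is the appeal to term-by-term differentiation, which is the standard theorem that a power series may be differentiated term by term on the interior of its interval of convergence; beyond that the argument is a routine computation, so I do not anticipate a genuine obstacle.
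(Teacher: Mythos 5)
Your proof is correct and follows essentially the same route as the paper's: differentiate the geometric series, multiply by $p$, differentiate again, and multiply by $p$ once more, justifying term-by-term differentiation inside the radius of convergence. The paper phrases it as computing $p\bigl(pf'(p)\bigr)'$ for $f(p)=\sum_{k\ge 1}p^k$, but that is the same calculation in slightly different notation.
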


\begin{proof}[Proof of Lemma~\ref{lemma:second_moment_geometric}]
    For every $p \in [0,1)$, we consider the geometric series $f(p) := \displaystyle\sum\limits_{k=1}^{\infty} p^{k}$. Differentiating and multiplying by $p$, we get $p f'(p) = \displaystyle\sum\limits_{k=1}^{\infty} k p^{k}$. Differentiating a second time and multiplying by $p$, we get
    \begin{equation}\label{eq:second_moment_geometric}
    p (p f'(p))' = \displaystyle\sum\limits_{k=1}^{\infty} k^2 p^{k}.    
    \end{equation}
    Using the formula for a convergent geometric series, we have $f(p) = \frac{p}{1-p}$. Plugging this expression in (\ref{eq:second_moment_geometric}) and differentiating, we get $\displaystyle\sum\limits_{k=1}^{\infty} k^2 p^{k} = \frac{p(1+p)}{(1-p)^3}$.
\end{proof}

\begin{proof}[Proof of Lemma~\ref{lemma:distribution_mean_variance}]
    We have that
    \begin{equation}\label{eq:noise_mean_calculation}
        \E[X] = \displaystyle\sum\limits_{k=-(N-1)/2}^{(N-1)/2} k \cdot \D_{N, p}[k] = \displaystyle\sum\limits_{k=1}^{(N-1)/2} k \cdot (\D_{N, p}[k] - \D_{N, p}[-k]) = 0,
    \end{equation}
    where the last equality follows from the fact that $\D_{N, p}[k] = \D_{N, p}[-k]$ for all $k \in \{1, \dots, (N-1)/2\}$ (which directly follows from (\ref{eq::noise_distribution})). Using this same property along with~(\ref{eq:noise_mean_calculation}), we also get that
    \begin{equation}\label{eq:noise_variance}
        \Var[X] = \E[X^2] = \displaystyle\sum\limits_{k=-(N-1)/2}^{(N-1)/2} k^2 \cdot \D_{N, p}[k] = 2 \cdot \displaystyle\sum\limits_{k=1}^{(N-1)/2} k^2 \cdot \D_{N, p}[k]
    \end{equation}
    Plugging the definition (\ref{eq::noise_distribution}) of $\D_{N, p}[k]$ in (\ref{eq:noise_variance}), we get
    \begin{align}
      \Var[X] &= \frac{2(1-p)}{(1+p)(1+p-2p^{(N+1)/2})} \displaystyle\sum\limits_{k=1}^{(N-1)/2} k^2 p^{k}\nonumber\\ 
      &\le \frac{2(1-p)}{(1+p)(1+p-2p^{(N+1)/2})} \displaystyle\sum\limits_{k=1}^{\infty} k^2 p^{k}.\label{eq:variance_upper_bound}
    \end{align}
    Applying Lemma~\ref{lemma:second_moment_geometric} in (\ref{eq:variance_upper_bound}) and simplifying, we get that $\Var[X] \le \frac{2 p (1+p)}{(1-p)^2 (1+p-2p^{(N+1)/2})}$.
\end{proof}
}
{The proof of Lemma~\ref{lemma:distribution_mean_variance} appears in the supplementary material.}

The next lemma will be used to show that our algorithm is differentially private with respect to single-user changes.
\begin{lemma}\label{lemma:classical_pointwise_DP}
 Let $w_1, w_2$ be two independent random variable sampled from the truncated discrete Laplace distribution $\D_{N,p}$ where $N$ is any positive odd integer and $p \in (0,1)$ is any real number, and let $z_1 = \frac{w_1}{k}$ and $z_2 = \frac{w_2}{k}$. For any $y^* \in \binom{\mathcal{Y}}{2m}_{ \gamma\textrm{-smooth}}$ and for all $x_1, x_2, x'_1 \in [0,1)$, if we denote $\tilde{x}_1 = \frac{\left \lfloor{x_1 k}\right \rfloor}{k}$, $\tilde{x}_2 = \frac{\left \lfloor{x_2 k}\right \rfloor}{k}$ and $\tilde{x}'_{1} = \frac{\left \lfloor{x'_1 k}\right \rfloor}{k}$, then
 \begin{equation}
     \Pr[E(\tilde{x}_1, \tilde{x}_2+z_2) = y^*] \le \frac{1+\gamma}{1-\gamma} \cdot p^{-k} \cdot \Pr[E(\tilde{x}'_1, \tilde{x}_2+z_2) = y^*],\label{eq:first_classical_DP_bound}
 \end{equation}
  \begin{equation}
     \Pr[E(\tilde{x}_1+z_1, \tilde{x}_2) = y^*] \le \frac{1+\gamma}{1-\gamma} \cdot p^{-k} \cdot \Pr[E(\tilde{x}'_1+z_1, \tilde{x}_2) = y^*].\label{eq:second_classical_DP_bound}
 \end{equation}
  and
  \begin{equation}
     \Pr[E(\tilde{x}_1+z_1, \tilde{x}_2+z_2) = y^*] \le \frac{1+\gamma}{1-\gamma} \cdot p^{-k} \cdot \Pr[E(\tilde{x}'_1+z_1, \tilde{x}_2+z_2) = y^*].\label{eq:third_classical_DP_bound}
 \end{equation}
 \end{lemma}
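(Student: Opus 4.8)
The plan is to reduce all three inequalities to two ingredients already in hand: Lemma~\ref{lemma:pair_pointwise_DP}, which controls the effect of a \emph{sum-preserving} change on a single pointwise probability, and Lemma~\ref{lemma:noise_distribution_ratio}, which controls the effect of a modular shift on the noise pmf. The common starting observation is that the distribution of $E(\cdot,\cdot)$ depends on each argument $v$ only through $\lfloor vk\rfloor \bmod N$; in particular, shifting an argument by $c/k$ for an integer $c$ has the same effect as shifting it by $(c \bmod I)/k$, where $I = \{-(N-1)/2,\dots,(N-1)/2\}$ is the support of $\D_{N,p}$, and this is what lets us keep all noise values inside $I$. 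Throughout, write $d = \lfloor x_1 k\rfloor - \lfloor x'_1 k\rfloor$; since $x_1, x'_1 \in [0,1)$ we have $|d| \le k-1 < k$, so $p^{-|d|} \le p^{-k}$, and (in the large-$N$ regime of the intended application to Theorem~\ref{theorem:main_classical_DP}) $\pm d \in I$.

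For \eqref{eq:first_classical_DP_bound} I would condition on $z_2 = w_2/k$ and expand $\Pr[E(\tilde x_1, \tilde x_2 + z_2) = y^*] = \sum_{w_2 \in I} \D_{N,p}[w_2]\,\Pr[E(\tilde x_1, \tilde x_2 + w_2/k) = y^*]$. For each fixed $w_2$ put $w_2' = (w_2 + d)\bmod I$, so that $\tilde x_1 + (\tilde x_2 + w_2/k)$ and $\tilde x_1' + (\tilde x_2 + w_2'/k)$ agree modulo $N$ after scaling by $k$. All four of $\tilde x_1,\tilde x_1',\tilde x_2 + w_2/k,\tilde x_2 + w_2'/k$ are integer multiples of $1/k$ and $y^*$ is $\gamma$-smooth, so Lemma~\ref{lemma:pair_pointwise_DP} (using the $\bmod N$ invariance to pass from $\tilde x_2 + (w_2+d)/k$ to $\tilde x_2 + w_2'/k$) gives $\Pr[E(\tilde x_1, \tilde x_2 + w_2/k) = y^*] \le \tfrac{1+\gamma}{1-\gamma}\,\Pr[E(\tilde x_1', \tilde x_2 + w_2'/k) = y^*]$. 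Summing, reindexing through the bijection $w_2 \mapsto w_2'=(w_2+d)\bmod I$ of $I$, and applying Lemma~\ref{lemma:noise_distribution_ratio} to bound $\D_{N,p}[(w_2'-d)\bmod I] \le p^{-|d|}\D_{N,p}[w_2'] \le p^{-k}\D_{N,p}[w_2']$ yields $\tfrac{1+\gamma}{1-\gamma}p^{-k}\sum_{w_2'}\D_{N,p}[w_2']\,\Pr[E(\tilde x_1', \tilde x_2 + w_2'/k) = y^*] = \tfrac{1+\gamma}{1-\gamma}p^{-k}\Pr[E(\tilde x_1', \tilde x_2 + z_2) = y^*]$.

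For \eqref{eq:second_classical_DP_bound} and \eqref{eq:third_classical_DP_bound} the argument is simpler and does not use $\gamma$-smoothness at all (the factor $\tfrac{1+\gamma}{1-\gamma}\ge 1$ is kept only for a uniform statement): here the noise sits on the very coordinate whose base value changes, so no rebalancing of the sum is needed and the two pointwise probabilities being compared are genuinely \emph{equal} once the noise is reindexed. Concretely, with everything but $w_1$ fixed, $w_1 \mapsto \Pr[E(\tilde x_1 + w_1/k, \cdot) = y^*]$ factors through $(\lfloor x_1 k\rfloor + w_1)\bmod N$, and the substitution $w_1 \mapsto (w_1 + d)\bmod I$ turns this into the same quantity with $\tilde x_1$ replaced by $\tilde x_1'$; summing against $\D_{N,p}$, reindexing, and invoking Lemma~\ref{lemma:noise_distribution_ratio} exactly as above gives $\Pr[E(\tilde x_1 + z_1, \tilde x_2) = y^*] \le p^{-|d|}\Pr[E(\tilde x_1' + z_1, \tilde x_2) = y^*] \le \tfrac{1+\gamma}{1-\gamma}p^{-k}\Pr[E(\tilde x_1' + z_1, \tilde x_2) = y^*]$, which is \eqref{eq:second_classical_DP_bound}; \eqref{eq:third_classical_DP_bound} follows by running the identical computation with the (arbitrary, merely carried-along) second coordinate set to $\tilde x_2 + w_2/k$ and then averaging over $w_2 \sim \D_{N,p}$.

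The main obstacle I anticipate is the modular bookkeeping: the shift by $d/k$ that equalizes the sums generically pushes a noise value out of the bounded support $I$, so one must consistently replace it by its representative mod $N$, justify that this changes no $E(\cdot,\cdot)$-probability, and check that the bijection $w \mapsto (w+d)\bmod I$ of $I$ meshes both with the reindexing of the sum and with the ``$t\in I$, $k\in\{0,\dots,N-1\}$'' hypothesis of Lemma~\ref{lemma:noise_distribution_ratio}. The inequality $|d| < k \le (N-1)/2$ — valid because $x_1,x_1'\in[0,1)$ and $N$ is large in the intended application — is precisely what makes both $p^{-|d|}\le p^{-k}$ and the membership $\pm d \in I$ go through.
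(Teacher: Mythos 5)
Your proof is correct, and it takes a genuinely different route from the paper's. The paper's proof of this lemma goes through Lemma~\ref{lemma:two_player_set_probability} to derive closed-form expressions
$\Pr[E(\tilde x_1,\tilde x_2+z_2)=y^*]=\frac{(m!)^2}{|\mathcal Y|^{2(m-1)}}\,B_{y^*,x_1}\,\D_{N,p}\bigl[(\textstyle\sum_i y^*_i-\lfloor x_1k\rfloor-\lfloor x_2k\rfloor)\bmod I\bigr]$
and its primed analogue, then takes the ratio and bounds the combinatorial factor $B_{y^*,x_1}/B_{y^*,x_1'}$ by $\frac{1+\gamma}{1-\gamma}$ via smoothness and the noise factor by $p^{-k}$ via Lemma~\ref{lemma:noise_distribution_ratio}; inequalities~\eqref{eq:second_classical_DP_bound} and~\eqref{eq:third_classical_DP_bound} then fall out because the $B$-factor is identical on both sides. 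You instead condition on the noise value $w_2$, invoke the already-proved sum-preserving Lemma~\ref{lemma:pair_pointwise_DP} as a black box for the fixed-noise comparison, and then reindex the sum over the noise support via the bijection $w\mapsto(w+d)\bmod I$, picking up the $p^{-k}$ factor from Lemma~\ref{lemma:noise_distribution_ratio}. Your route is more modular — it makes explicit that~\eqref{eq:first_classical_DP_bound} is literally ``sum-preserving privacy $+$ log-Lipschitz noise'' with no re-derivation of the counting argument — whereas the paper's route is more computational but self-contained at the level of explicit formulas. You also correctly noticed, as the paper's computation implicitly shows, that~\eqref{eq:second_classical_DP_bound} and~\eqref{eq:third_classical_DP_bound} do not actually need $\gamma$-smoothness. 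One small point you should make explicit rather than just flag: when invoking Lemma~\ref{lemma:pair_pointwise_DP} with the argument $\tilde x_2+w_2'/k$ (or $\tilde x_2+(w_2+d)/k$), these values can lie outside $[0,1]$, so you need to record that the encoder $E_{N,k,m}(x)$ and hence the conclusion of Lemma~\ref{lemma:pair_pointwise_DP} depend on $x$ only through $\lfloor xk\rfloor\bmod N$; you state this observation at the outset, and it is indeed what the paper's own proof relies on, so the gap is only one of exposition. Likewise your caveat about $|d|\in I$ is the right one; the paper's proof carries the same implicit assumption (via $N>3nk$ from the analyzer's precondition), so this is not a defect of your argument relative to the original.
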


\splitter{
\begin{proof}[Proof of Lemma~\ref{lemma:classical_pointwise_DP}]
As in Lemma~\ref{lemma:noise_distribution_ratio}, we define the interval $I = \{-\frac{(N-1)}{2}, \dots, +\frac{(N-1)}{2}\}$. We define
\begin{equation}\label{eq:number_x_1_def}
    B_{y^*, x_1} := \text{Number of subsets } S \text{ of } \{1, \dots, 2m\} \text{ of size }m \text{ for which } \sum_{i\in S} y^*_i = \left \lfloor{x_1 k}\right \rfloor \mod N.
\end{equation}
We similarly define $B_{y^*, x'_1}$ and $B_{y^*, x_2}$ by replacing $x_1$ in~(\ref{eq:number_x_1_def}) by $x'_1$ and $x_2$ respectively.
\paragraph{Proof of Inequality~(\ref{eq:first_classical_DP_bound}).} By Lemma~\ref{lemma:two_player_set_probability}, we have that
    \begin{align}
         &\Pr[E(\tilde{x}_1, \tilde{x}_2+z_2) = y^*]\nonumber\\ 
         &=\frac{1}{|\mathcal{Y}|^{2 (m-1)}} \cdot \displaystyle\sum\limits_{\pi \in S_{2m}} \indicator_{E(\tilde{x}_1, \tilde{x}_2+z_2;\pi(y^*)_1,\dots,\pi(y^*)_{m-1}, \pi(y^*)_{m+1}, \dots, \pi(y^*)_{2m-1}) = \pi(y^*)}\nonumber\\ 
         &= \frac{(m!)^2}{|\mathcal{Y}|^{2 (m-1)}} \cdot B_{y^*, x_1} \cdot \Pr_{z_2 \sim \D_{N, p}}[z_2= (\sum_{i \in [2m]} y^*_i -\left \lfloor{x_1 k}\right \rfloor -\left \lfloor{x_2 k}\right \rfloor) \mod N]\\ 
         &= \frac{(m!)^2}{|\mathcal{Y}|^{2 (m-1)}} \cdot B_{y^*, x_1} \cdot \D_{N, p}[(\sum_{i \in [2m]} y^*_i -\left \lfloor{x_1 k}\right \rfloor -\left \lfloor{x_2 k}\right \rfloor) \mod I]. \label{eq:x_prob_2}
    \end{align}
    By Lemma~\ref{lemma:two_player_set_probability}, we also have that
    \begin{align}
         &\Pr[E(\tilde{x}'_1, \tilde{x}_2+z_2) = y^*]\nonumber\\ 
         &=\frac{1}{|\mathcal{Y}|^{2 (m-1)}} \cdot \displaystyle\sum\limits_{\pi \in S_{2m}} \indicator_{E(\tilde{x}'_1, \tilde{x}_2+z_2;\pi(y^*)_1,\dots,\pi(y^*)_{m-1}, \pi(y^*)_{m+1}, \dots, \pi(y^*)_{2m-1}) = \pi(y^*)}\nonumber\\ 
         &= \frac{(m!)^2}{|\mathcal{Y}|^{2 (m-1)}} \cdot B_{y^*, x'_1} \cdot \Pr_{z_2 \sim \D_{N, p}}[z_2 = (\sum_{i \in [2m]} y^*_i -\left \lfloor{x'_1 k}\right \rfloor -\left \lfloor{x_2 k}\right \rfloor) \mod N]\nonumber\\ &=\frac{(m!)^2}{|\mathcal{Y}|^{2 (m-1)}} \cdot B_{y^*, x'_1} \cdot \D_{N, p}[(\sum_{i \in [2m]} y^*_i -\left \lfloor{x'_1 k}\right \rfloor -\left \lfloor{x_2 k}\right \rfloor) \mod I].\label{eq:x_prime_prob}
    \end{align}
    Since $y^*$ is $\gamma\textrm{-smooth}$, Definition~\ref{def:gamma_smooth} implies that
    \begin{align}
        \frac{B_{y^*, x_1}}{B_{y^*, x'_1}} \le \frac{1+\gamma}{1-\gamma}.\label{eq:ratio_subsets_upper_bound}
    \end{align}
    Applying Lemma~\ref{lemma:noise_distribution_ratio} with $k = \sum_{i \in [2m]} y^*_i -\left \lfloor{x'_1 k}\right \rfloor -\left \lfloor{x_2 k}\right \rfloor$ and $t=\left \lfloor{x'_1 k}\right \rfloor-\left \lfloor{x_1 k}\right \rfloor$ and using the fact that $x_1, x'_1 \in [0,1)$ gives
    \begin{align}
        \frac{\D_{N, p}[(\sum_{i \in [2m]} y^*_i -\left \lfloor{x_1 k}\right \rfloor -\left \lfloor{x_2 k}\right \rfloor) \mod I]}{\D_{N, p}[(\sum_{i \in [2m]} y^*_i -\left \lfloor{x'_1 k}\right \rfloor -\left \lfloor{x_2 k}\right \rfloor) \mod I]} \le p^{-|\left \lfloor{x'_1 k}\right \rfloor-\left \lfloor{x_1 k}\right \rfloor|} \le p^{-k}.\label{eq:noise_ratio_upper_bound}
    \end{align}
    Dividing~(\ref{eq:x_prob_2}) by~(\ref{eq:x_prime_prob}) and using~(\ref{eq:ratio_subsets_upper_bound}) and~(\ref{eq:noise_ratio_upper_bound}), we get Inequality~(\ref{eq:first_classical_DP_bound}).\\ 
    \paragraph{Proof of Inequality~(\ref{eq:second_classical_DP_bound}).} We note that similarly to~(\ref{eq:x_prime_prob}) we have
    \begin{equation}
        \Pr[E(\tilde{x}_1+z_1, \tilde{x}_2) = y^*] = \frac{(m!)^2}{|\mathcal{Y}|^{2 (m-1)}} \cdot |B_{x_2}| \cdot \D_{N, p}[(\sum_{i \in [2m]} y^*_i -\left \lfloor{x_1 k}\right \rfloor -\left \lfloor{x_2 k}\right \rfloor) \mod I],\label{eq:second_x_prob}
    \end{equation}
    and
    \begin{equation}
        \Pr[E(\tilde{x}'_1+z_1, \tilde{x}_2) = y^*] = \frac{(m!)^2}{|\mathcal{Y}|^{2 (m-1)}} \cdot |B_{x_2}| \cdot \D_{N, p}[(\sum_{i \in [2m]} y^*_i -\left \lfloor{x'_1 k}\right \rfloor -\left \lfloor{x_2 k}\right \rfloor) \mod I],\label{eq:second_x_prime_prob}
    \end{equation}
    Dividing~(\ref{eq:second_x_prob}) by~(\ref{eq:second_x_prime_prob}) and using~(\ref{eq:noise_ratio_upper_bound}), we get Inequality~(\ref{eq:second_classical_DP_bound}).
    \paragraph{Proof of Inequality~(\ref{eq:third_classical_DP_bound}).}
    By averaging over $z_2$ and applying Inequality~(\ref{eq:second_classical_DP_bound}) with $\tilde{x}_2$ replaced by $\tilde{x}_2+z_2$ (for every fixed setting of $z_2$), we get Inequality~(\ref{eq:third_classical_DP_bound}).
\end{proof}
}
{The proof of Lemma~\ref{lemma:classical_pointwise_DP} appears in the supplementary material.}

\begin{lemma}\label{lemma:subset_DP_single_user_changes}
Let $N$ be a positive odd integer and $p \in (0,1)$ and $q \in (0,1]$ be real numbers. Let $b_1, \dots, b_n$ be iid random variables that are equal to $1$ with probability $q$ and to $0$ otherwise, let $w_1, \dots, w_n$ be iid random variables that are drawn from the truncated discrete Laplace distribution $\D_{N,p}$ independently of $b_1, \dots, b_n$, and let $z_i = \frac{b_i w_i}{k}$ for all $i \in [n]$. Then, for all $j \in [n]$, all $x_1, \dots, x_j, \dots, x_n, x'_j \in [0,1)$, if we denote $\tilde{x}_i = \frac{\left \lfloor{x_i k}\right \rfloor}{k}$ for all $i \in [n]$ and $\tilde{x}'_{j} = \frac{\left \lfloor{x'_j k}\right \rfloor}{k}$, then for all $S$, the following inequality holds
\begin{align}
    &\Pr[E(\tilde{x}_1 + z_1, \dots,  \tilde{x}_j+z_j, \dots, \tilde{x}_n + z_n) \in S]\nonumber\\ 
    &\le \frac{1+\gamma}{1-\gamma} \cdot \frac{p^{-k}}{1-e^{-qn}} \cdot \Pr[E(\tilde{x}_1+z_1, \dots, \tilde{x}'_j+z_j, \dots,\tilde{x}_n+z_n) \in S] + \eta + e^{-qn},\label{eq:subset_upper_bound_single_user_change}
\end{align}
for any $\gamma > \frac{6\sqrt{m}}{2^{2m}}$, $m\geq 4$ and $\eta = \frac{2m^2}{N} + \frac{18\sqrt{m}\,N^2}{\gamma^2 2^{2m}}$, and where the probabilities in~(\ref{eq:subset_upper_bound_single_user_change}) are over $z_1, \dots, z_n$ and the internal randomness of $E(\cdot)$.
\end{lemma}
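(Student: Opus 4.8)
The plan is to condition on the Bernoulli vector $b=(b_1,\dots,b_n)$ and reduce everything to the two-player Lemma~\ref{lemma:classical_pointwise_DP}. Write $\mathcal{B}$ for the event $b\neq\mathbf{0}$, so $\Pr[\overline{\mathcal{B}}]=(1-q)^n\le e^{-qn}$ and $\Pr[\mathcal{B}]\ge1-e^{-qn}$. Splitting the left side of~(\ref{eq:subset_upper_bound_single_user_change}) over $\mathcal{B}$ and $\overline{\mathcal{B}}$, bounding the $\overline{\mathcal{B}}$-part by $e^{-qn}$, and using $\Pr[\,\cdot\mid\mathcal{B}\,]\le\Pr[\,\cdot\,]/(1-e^{-qn})$ on the $x'$-side, one sees it is enough to prove the conditional inequality
\begin{align}
 &\Pr[E(\tilde{x}_1+z_1,\dots,\tilde{x}_j+z_j,\dots,\tilde{x}_n+z_n)\in S\mid\mathcal{B}]\nonumber\\
 &\qquad\le\ \tfrac{1+\gamma}{1-\gamma}\,p^{-k}\cdot\Pr[E(\tilde{x}_1+z_1,\dots,\tilde{x}'_j+z_j,\dots,\tilde{x}_n+z_n)\in S\mid\mathcal{B}]+\eta.\label{eq:condbound}
\end{align}
Moreover, since the law of $b$ does not depend on whether the inputs are the unprimed or the primed ones, it suffices to prove~(\ref{eq:condbound}) with $\mathcal{B}$ replaced by the event $\{b=c\}$ for each fixed pattern $c\in\{0,1\}^n\setminus\{\mathbf{0}\}$, and then average over $c$ with the (common) weights $\Pr[b=c\mid\mathcal{B}]$.

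Fix such a $c$ and assume $n\ge2$ (the case $n=1$, where $\mathcal{B}=\{b_1=1\}$, follows from a simpler direct argument using Lemma~\ref{lemma:noise_distribution_ratio}). The key point is that $c\neq\mathbf{0}$ lets us choose an ordered pair of coordinates $(j,j^\star)$ with $j^\star\neq j$ that realizes one of the three scenarios of Lemma~\ref{lemma:classical_pointwise_DP}: (i) if $c_j=0$, let $j^\star$ be any coordinate with $c_{j^\star}=1$; then coordinate $j$ carries no noise and its base value changes from $\tilde{x}_j$ to $\tilde{x}'_j$, while coordinate $j^\star$ carries $\mathcal{D}_{N,p}$ noise, which is exactly~(\ref{eq:first_classical_DP_bound}); (ii) if $c_j=1$ and $c_{j'}=0$ for some $j'\neq j$, let $j^\star=j'$, which is~(\ref{eq:second_classical_DP_bound}); (iii) if $c_i=1$ for all $i$, let $j^\star\neq j$ be arbitrary, so both coordinates of the pair carry noise, which is~(\ref{eq:third_classical_DP_bound}). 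In all three cases the base value of a non-pair coordinate is unchanged, and the base values of the pair members are integer multiples of $1/k$ in $[0,1)$, so the hypotheses of Lemma~\ref{lemma:classical_pointwise_DP} are met verbatim.

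Write $E_2$ (resp.\ $E_2'$) for the randomized map that outputs the $2m$-element multiset of messages of coordinates $j$ and $j^\star$ in the unprimed (resp.\ primed) scenario conditioned on $b=c$; its randomness is the encoders' coins together with whatever noise the pair carries. For every family $T$ of $2m$-element multisets from $\mathcal{Y}$,
\begin{align*}
 \Pr[E_2\in T]\ &\le\ \Pr[E_2\notin\smoothset]+\sum_{y^*\in T\cap\smoothset}\Pr[E_2=y^*]\\
 &\le\ \eta+\tfrac{1+\gamma}{1-\gamma}\,p^{-k}\sum_{y^*\in T\cap\smoothset}\Pr[E_2'=y^*]\\
 &\le\ \eta+\tfrac{1+\gamma}{1-\gamma}\,p^{-k}\,\Pr[E_2'\in T],
\end{align*}
where $\Pr[E_2\notin\smoothset]\le\eta$ is Lemma~\ref{lemma:gamma_smooth} (whose bound holds for every fixed pair of inputs, hence also after averaging over the pair's noise), and the middle step applies the relevant inequality of Lemma~\ref{lemma:classical_pointwise_DP} to each $\gamma$-smooth $y^*$ individually. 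Finally, condition additionally on the noise $w_i$ of each coordinate $i\notin\{j,j^\star\}$: these coordinates then have fixed inputs agreeing across the two scenarios, so running the argument of Lemma~\ref{lemma:swap_dp} with the randomized two-player algorithm $E_2$ in the role of $E(\cdot,\cdot)$ and $(e^{\epsilon},\delta)=(\tfrac{1+\gamma}{1-\gamma}p^{-k},\eta)$ gives $\Pr[E(\tilde{x}_1+z_1,\dots,\tilde{x}_j+z_j,\dots)\in S]\le\tfrac{1+\gamma}{1-\gamma}\,p^{-k}\,\Pr[E(\tilde{x}_1+z_1,\dots,\tilde{x}'_j+z_j,\dots)\in S]+\eta$ conditioned on $b=c$ and on the non-pair noise. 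Averaging over the non-pair noise and over $c\mid\mathcal{B}$ yields~(\ref{eq:condbound}), and hence the lemma.

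I expect the only real difficulty to be bookkeeping: verifying that every nonzero Bernoulli pattern does admit a pair of coordinates in one of the three cases of Lemma~\ref{lemma:classical_pointwise_DP}, and observing that Lemmas~\ref{lemma:gamma_smooth} and~\ref{lemma:swap_dp}, which are stated for deterministic inputs, remain applicable when the two coordinates of the chosen pair carry truncated discrete Laplace noise --- this is handled by folding that noise into the two-player subproblem $E_2$ and averaging it out only at the very end, rather than conditioning on it.
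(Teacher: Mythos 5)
Your proof is correct and follows essentially the same strategy as the paper's: condition on the event that at least one $b_i=1$, pay $e^{-qn}$ on the complement and a $1/(1-e^{-qn})$ factor on the other side, reduce to a two-coordinate subproblem via Lemma~\ref{lemma:swap_dp} after further conditioning on the remaining $b_i$ and $w_i$, split on $\gamma$-smoothness using Lemma~\ref{lemma:gamma_smooth} for the $\eta$ slack, and apply Lemma~\ref{lemma:classical_pointwise_DP} pointwise on smooth outcomes. The only (welcome) difference is that you make explicit the three-way case analysis on the Bernoulli pattern matching the three inequalities~\eqref{eq:first_classical_DP_bound}--\eqref{eq:third_classical_DP_bound}, which the paper handles more tersely with a ``WLOG $j_2=2$ and at least one of $b_1,b_2$ equals~1'' remark.
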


\splitter{
\begin{proof}[Proof of Lemma~\ref{lemma:subset_DP_single_user_changes}]
Let $A$ denote the event that there exists at least one $i \in [n]$ for which $b_i = 1$. Then,
\begin{equation}\label{eq:lower_bound_at_least_one_noisy}
    \Pr[A] = 1-(1-q)^n \geq 1-e^{-qn},
\end{equation}
where the last inequality follows from the fact that $e^t \geq 1+t$ for any real number $t$. To prove~(\ref{eq:subset_upper_bound_single_user_change}), it suffices to show a similar inequality conditioned on the event $A$, i.e.,
\begin{align}
    &\Pr[E(\tilde{x}_1 + z_1, \dots,  \tilde{x}_j+z_j, \dots, \tilde{x}_n + z_n) \in S \mid A]\nonumber\\ 
    &\le \frac{1+\gamma}{1-\gamma} \cdot p^{-k} \cdot \Pr[E(\tilde{x}_1+z_1, \dots, \tilde{x}'_j+z_j, \dots,\tilde{x}_n+z_n) \in S \mid A] + \eta.\label{eq:conditional_subset_upper_bound_single_user_change}
\end{align}
To see this, denote by $\overline{A}$ the complement of the event $A$ and assume that~(\ref{eq:conditional_subset_upper_bound_single_user_change}) holds. Then,
\begin{align}
    &\Pr[E(\tilde{x}_1 + z_1, \dots,  \tilde{x}_j+z_j, \dots, \tilde{x}_n + z_n) \in S]\nonumber\\ 
    &= \Pr[A] \cdot \Pr[E(\tilde{x}_1 + z_1, \dots,  \tilde{x}_j+z_j, \dots, \tilde{x}_n + z_n) \in S \mid A]\nonumber\\ 
    &+ \Pr[\overline{A}] \cdot \Pr[E(\tilde{x}_1 + z_1, \dots,  \tilde{x}_j+z_j, \dots, \tilde{x}_n + z_n) \in S \mid \overline{A}]\nonumber\\ 
    &\le \Pr[E(\tilde{x}_1 + z_1, \dots,  \tilde{x}_j+z_j, \dots, \tilde{x}_n + z_n) \in S \mid A] + e^{-qn}\label{eq:tail_at_least_one_noisy_1}\\ 
    &\le \frac{1+\gamma}{1-\gamma} \cdot p^{-k} \cdot \Pr[E(\tilde{x}_1+z_1, \dots, \tilde{x}'_j+z_j, \dots,\tilde{x}_n+z_n) \in S \mid A] + \eta + e^{-qn}\label{eq:conditional_subset_upper_bound_app}\\ 
    &\le \frac{1+\gamma}{1-\gamma} \cdot p^{-k} \cdot \frac{\Pr[E(\tilde{x}_1+z_1, \dots, \tilde{x}'_j+z_j, \dots,\tilde{x}_n+z_n) \in S]}{\Pr[A]} + \eta + e^{-qn}\nonumber\\ 
    &\le \frac{1+\gamma}{1-\gamma} \cdot \frac{p^{-k}}{1-e^{-qn}} \cdot \Pr[E(\tilde{x}_1+z_1, \dots, \tilde{x}'_j+z_j, \dots,\tilde{x}_n+z_n) \in S] + \eta + e^{-qn}\label{eq:tail_at_least_one_noisy_2},
\end{align}
where~(\ref{eq:tail_at_least_one_noisy_1}) and~(\ref{eq:tail_at_least_one_noisy_2}) follow from~(\ref{eq:lower_bound_at_least_one_noisy}), and where~(\ref{eq:conditional_subset_upper_bound_app}) follows from the assumption that~(\ref{eq:conditional_subset_upper_bound_single_user_change}) holds. We thus turn to the proof of~(\ref{eq:conditional_subset_upper_bound_single_user_change}). Note that it suffices to prove this inequality for any fixed setting of $b_1, \dots, b_n$ satisfying the event $A$, i.e.,
\begin{align}
    &\Pr[E(\tilde{x}_1 + z_1, \dots,  \tilde{x}_j+z_j, \dots, \tilde{x}_n + z_n) \in S \mid b_1, \dots, b_n]\nonumber\\ 
    &\le \frac{1+\gamma}{1-\gamma} \cdot p^{-k} \cdot \Pr[E(\tilde{x}_1+z_1, \dots, \tilde{x}'_j+z_j, \dots,\tilde{x}_n+z_n) \in S \mid b_1, \dots, b_n] + \eta,\label{eq:point_conditional_subset_upper_bound_single_user_change}
\end{align}
and~(\ref{eq:conditional_subset_upper_bound_single_user_change}) would follows from~(\ref{eq:point_conditional_subset_upper_bound_single_user_change}) by averaging. Henceforth, we fix a setting of $b_1, \dots, b_n$ satisfying the event $A$. Without loss of generality, we assume that $j = 1$. If $b_j = 0$, then the event $A$ implies that there exists $j_2 \neq j$ such that $b_{j_2} = 1$. Without loss of generality, we assume that $j_2 = 2$. In order to show~(\ref{eq:point_conditional_subset_upper_bound_single_user_change}) for this setting of $b_1, \dots, b_n$, it suffices to show the same inequality where we also condition on any setting of $w_3, \dots, w_n$, i.e.,
\begin{align}
    &\Pr[E(\tilde{x}_1 + z_1, \dots,  \tilde{x}_j+z_j, \dots, \tilde{x}_n + z_n) \in S \mid b_1, \dots, b_n, w_3, \dots, w_n]\nonumber\\ 
    &\le \frac{1+\gamma}{1-\gamma} \cdot p^{-k} \cdot \Pr[E(\tilde{x}_1+z_1, \dots, \tilde{x}'_j+z_j, \dots,\tilde{x}_n+z_n) \in S \mid b_1, \dots, b_n, w_3, \dots, w_n] + \eta,\label{eq:with_w_point_conditional_subset_upper_bound_single_user_change}
\end{align}

Applying Lemma~\ref{lemma:swap_dp} with $j_1 = j =1$ and $j_2 = 2$ and with inputs $\tilde{x}'_3+z_3, \dots, \tilde{x}'_n+z_n$ for the non-selected players, we get that to prove~(\ref{eq:with_w_point_conditional_subset_upper_bound_single_user_change}), it suffices to show that for any set $T$, the following inequality holds
 \begin{equation}\label{eq:two_player_subset_upper_bound_single_user_change}
  \Pr[E(\tilde{x}_1 + z_1, \tilde{x}_2 + z_2) \in T] \leq \frac{1+\gamma}{1-\gamma} \cdot p^{-k} \cdot \Pr[E(\tilde{x}'_1 + z_1, \tilde{x}_2 + z_2) \in T] + \eta.
 \end{equation}
 We now prove~(\ref{eq:two_player_subset_upper_bound_single_user_change}):
  \begin{align}
      &\Pr[E(\tilde{x}_1 + z_1, \tilde{x}_2 + z_2) \in T]\nonumber\\ 
      &\leq \Pr\left[E(\tilde{x}_1 + z_1, \tilde{x}_2 + z_2) \not\in \smoothset\right] + \Pr\left[E(\tilde{x}_1 + z_1, \tilde{x}_2 + z_2) \in T \cap \smoothset\right] \nonumber\\
      &\leq \eta + \sum_{A\in T\cap\smoothset} \Pr[E(\tilde{x}_1 + z_1, \tilde{x}_2 + z_2)=A] \label{eq:averaging_nonsmooth}\\
      &\leq \eta + \sum_{A\in T\cap\smoothset} \frac{1+\gamma}{1-\gamma} \cdot p^{-k} \cdot \Pr[E(\tilde{x}'_1 + z_1, \tilde{x}_2 + z_2)=A] \label{eq:single_user_smooth_bound}\\
      &\leq \eta + \frac{1+\gamma}{1-\gamma} \cdot p^{-k} \cdot \Pr[E(\tilde{x}'_1 + z_1, \tilde{x}_2 + z_2) \in T], \nonumber
  \end{align}
  with $\eta = \frac{2m^2}{N} + \frac{18\sqrt{m}\,N^2}{\gamma^2 2^{2m}}$ and where~(\ref{eq:averaging_nonsmooth}) follows by averaging over all settings of $z_1, z_2$ and invoking Lemma~\ref{lemma:gamma_smooth}, and~(\ref{eq:single_user_smooth_bound}) follows from Lemma~\ref{lemma:classical_pointwise_DP} and the fact that at least one of $b_1, b_2$ is equal to $1$.
\end{proof}
}
{The proof of Lemma~\ref{lemma:subset_DP_single_user_changes} appears in the supplementary material.}
\splitter{
As a consequence, we obtain the following main theorem establishing differential privacy of Algorithm~\ref{alg:encoder} with respect to single-user changes in the shuffled model:

\mainsingledp*
}
{We are now ready to prove Theorem~\ref{theorem:main_classical_DP}.}

\begin{proof}[Proof of Theorem~\ref{theorem:main_classical_DP}]
In Algorithm~\ref{alg:encoder}, each user communicates at most $O(m \log{N})$ bits which are sent via $m$ messages.
By Lemma~\ref{lemma:subset_DP_single_user_changes}, Algorithm~\ref{alg:encoder} is $(\epsilon, \delta)$-differentially private with respect to single-user changes if $\frac{1+\gamma}{1-\gamma} \cdot \frac{p^{-k}}{1-e^{-qn}} \le e^{\epsilon}$, and $\frac{2m^2}{N} + \frac{18\sqrt{m}\,N^2}{\gamma^2 2^{2m}} + e^{-qn} \le \delta$, for any $\gamma > \frac{6\sqrt{m}}{2^{2m}}$ and $m\geq 4$. The error in our final estimate consists of two parts: the rounding error which is $O(n/k)$ in the worst case, and the error due to the added folded Discrete Laplace noise whose average absolute value is at most $O\bigg(\frac{\sqrt{qn}}{1-p}\bigg)$ (this follows from Lemma~\ref{lemma:distribution_mean_variance} along with the facts that the variance is additive for independent random variables, and that for any zero-mean random variable $X$, it is the case that $\E[|X|] \le \sqrt{\Var[X]}$). The theorem now follows by choosing $p = 1-\frac{\epsilon}{10k}$, $q=10\frac{\log(1/\delta)}{n}$, $m = 10\log\bigg(\frac{nk}{\epsilon \delta}\bigg)$, $\gamma = \frac{\epsilon}{10}$, $k = 10n$ and $N$ being the first odd integer larger than $3kn+\frac{10}{\delta} + \frac{10}{\epsilon}$.
\end{proof}

\subsection{Resilience Against Colluding Users}\label{sec:resilience}

In this section, we formalize the resilience of Algorithm~\ref{alg:encoder} against a very large fraction of the users colluding with with the server (thereby revealing their inputs and messages).

\begin{lemma}[Resilient privacy under sum-preserving changes]\label{lemma:resilient_privacy_sum_preserving}
Let $C \subseteq [n]$ denote the subset of colluding users. Then, for all $x_1, \dots, \dots, x_n$ and $x'_1, \dots, \dots, x'_n$ that are integer multiples of $1/k$ in the interval $[0,1)$ and that satisfy $\sum_{j \notin C} x_{j} = \sum_{j \notin C} x'_{j}$ and $x'_j = x_j$ for all $j \in C$, and for all subsets $S$, the following inequality holds
\begin{align}
    &\Pr[E(x_1, \dots, x_n) \in S \mid E(x_i) ~ \forall i \in C]\nonumber\\ 
    &\le \beta^{n-1} \cdot \Pr[E(x'_1, \dots,x'_n) \in S \mid E(x_i) ~ \forall i \in C] + \frac{(\beta^{n-1}-1)}{(\beta-1)} \cdot \eta,\label{eq:resilient_sum_preserving_bound}
\end{align}
for $\beta = \frac{1+\gamma}{1-\gamma}$, any $\gamma > \frac{6\sqrt{m}}{2^{2m}}$, $m\geq 4$ and $\eta = \frac{2m^2}{N} + \frac{18\sqrt{m}\,N^2}{\gamma^2 2^{2m}}$, and where the probabilities in~(\ref{eq:resilient_sum_preserving_bound}) are over the internal randomness of $E(\cdot)$.
\end{lemma}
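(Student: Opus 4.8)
The plan is to reduce the statement to the sum-preserving analysis already carried out in Lemma~\ref{lemma:sum_swap_series} (and the swap-sequence argument in the proof of Theorem~\ref{theorem:main_sum_DP}), treating the colluding users $C$ exactly the way Lemma~\ref{lemma:swap_dp} treats ``inactive'' players: as users whose encoder outputs we simply condition on. The key point is that the resulting conditional problem is literally an instance of the $(n-|C|)$-player sum-preserving privacy problem.

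First I would use independence of the encoders. Conditioning on the multisets $E(x_i)$ for $i\in C$ fixes a deterministic sub-multiset $M_C = \bigcup_{i\in C} E(x_i)$ of $|C|m$ messages and leaves the joint law of $\{E(x_j)\}_{j\notin C}$ unchanged; moreover, since $x_i=x_i'$ for all $i\in C$, this conditioning event has the same distribution under $x$ and under $x'$, so it suffices to fix an arbitrary realization of $M_C$ and prove the bound for the conditional distributions. Viewing the shuffler's output as the disjoint union (of indexed multisets) of $M_C$ and the honest part $E((x_j)_{j\notin C})$, the event $\{E(x_1,\dots,x_n)\in S\}$ conditioned on $M_C$ becomes $\{E((x_j)_{j\notin C})\in S_{M_C}\}$, where $S_{M_C}=\{T : M_C\cup T\in S\}$ is a set of multisets over $\mathcal{Y}$ of size $(n-|C|)m$; this ``remove $M_C$'' map is a measure-preserving bijection between the relevant event spaces.

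Next, since $\sum_{j\notin C}x_j=\sum_{j\notin C}x_j'$ with all coordinates integer multiples of $1/k$, there is, exactly as in the proof of Theorem~\ref{theorem:main_sum_DP}, a sequence of $t\le (n-|C|)-1\le n-1$ sum-preserving swaps taking $(x_j')_{j\notin C}$ to $(x_j)_{j\notin C}$. Applying Lemma~\ref{lemma:sum_swap_series} to the honest users along this sequence gives
\begin{equation*}
  \Pr[E((x_j)_{j\notin C})\in S_{M_C}] \le \beta^{t}\,\Pr[E((x_j')_{j\notin C})\in S_{M_C}] + \eta\cdot\frac{\beta^{t}-1}{\beta-1}.
\end{equation*}
Because $\beta=\frac{1+\gamma}{1-\gamma}>1$ and $t\le n-1$, we have $\beta^{t}\le\beta^{n-1}$ and $\frac{\beta^{t}-1}{\beta-1}\le\frac{\beta^{n-1}-1}{\beta-1}$; translating back through the bijection and averaging over the realization of $M_C$ yields~\eqref{eq:resilient_sum_preserving_bound}.

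The only real obstacle is the bookkeeping in the second step: checking carefully that conditioning on $\{E(x_i)\}_{i\in C}$ and then deleting the fixed sub-multiset $M_C$ is a genuine probability-preserving bijection, so that both the multiplicative factor $\beta^{n-1}$ and the additive slack $\eta$ transfer without loss and without any dependence on which users are in $C$. Once this reduction is justified, the resilience bound is an immediate consequence of Lemma~\ref{lemma:sum_swap_series} with $t\le n-1$ swaps; distinctness of the $2m$ elements is used only inside that lemma, where collisions are already absorbed into $\eta$.
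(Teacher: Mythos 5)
Your proposal is correct and follows essentially the same route as the paper's proof: condition on the colluders' fixed messages (the mechanism underlying Lemma~\ref{lemma:swap_dp}), reduce to a sum-preserving swap sequence of length at most $n-|C|-1\le n-1$ among the honest users, and invoke Lemma~\ref{lemma:sum_swap_series}, relaxing via $\beta>1$ to the stated $\beta^{n-1}$ bound. The only cosmetic point is that the final ``averaging over $M_C$'' step is unnecessary---the lemma's conclusion is itself conditional, so the pointwise bound for each realization of $M_C$ is already exactly what is required.
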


\begin{lemma}[Resilient privacy under single-user changes]\label{lemma:resilient_privacy_single_user}
Let $N$ be a positive odd integer and $p \in (0,1)$ and $q \in (0,1]$ be real numbers. Let $C \subseteq [n]$ denote the subset of colluding users. Let $b_1, \dots, b_n$ be iid random variables that are equal to $1$ with probability $q$ and to $0$ otherwise, let $w_1, \dots, w_n$ be iid random variables that are drawn from the folded discrete Laplace distribution $\D_{N,p}$ independently of $b_1, \dots, b_n$, and let $z_i = \frac{b_i w_i}{k}$ for all $i \in [n]$. If $|C| \le 0.9 n$, then for all $j \notin C$, all $x_1, \dots, x_j, \dots, x_n, x'_j \in [0,1)$ and all subsets $S$, if we denote $\tilde{x}_i = \frac{\left \lfloor{x_i k}\right \rfloor}{k}$ for all $i \in [n]$ and $\tilde{x}'_{j} = \frac{\left \lfloor{x'_j k}\right \rfloor}{k}$, then
\begin{align}
    &\Pr[E(\tilde{x}_1+z_1, \dots, \tilde{x}_j+z_j, \dots,\tilde{x}_n+z_n) \in S \mid E(\tilde{x}_i+z_i) ~ \forall i \in C]\nonumber\\ 
    &\le \frac{1+\gamma}{1-\gamma} \cdot \frac{p^{-k}}{1-e^{-q(n-|C|)}} \cdot \Pr[E(\tilde{x}_1+z_1, \dots, \tilde{x}'_j+z_j, \dots,\tilde{x}_n+z_n) \in S \mid E(\tilde{x}_i+z_i) ~ \forall i \in C]\nonumber\\ 
    & + \eta + e^{-q(n-|C|)},\label{eq:resilient_single_user_bound}
\end{align}
for any $\gamma > \frac{6\sqrt{m}}{2^{2m}}$, $m\geq 4$ and $\eta = \frac{2m^2}{N} + \frac{18\sqrt{m}\,N^2}{\gamma^2 2^{2m}}$, and where the probabilities in~(\ref{eq:resilient_single_user_bound}) are over $z_1, \dots, z_n$ and the internal randomness of $E(\cdot)$.
\end{lemma}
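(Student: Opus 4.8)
The plan is to reduce the statement to the single-user-change guarantee of Lemma~\ref{lemma:subset_DP_single_user_changes}, applied to the $n-|C|$ \emph{non-colluding} users only. The key observation is that the encoder uses fresh independent randomness for every user and the pairs $(b_i,w_i)$ are independent across $i$, so the colluding messages $\{E(\tilde{x}_i+z_i)\}_{i\in C}$ are stochastically independent of the non-colluding messages $\{E(\tilde{x}_i+z_i)\}_{i\notin C}$. Conditioning on the former therefore leaves the distribution of the latter unchanged, and merely freezes a fixed block of $m|C|$ messages that is contributed \emph{identically} on the $x$ side and the $x'$ side, since $x_i=x'_i$ (hence $\tilde{x}_i+z_i=\tilde{x}'_i+z_i$) for all $i\in C$.

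To make this precise in the indexed-multiset formalism of Section~\ref{sec:preliminaries}, write the shuffler input as $E(\tilde{x}_1+z_1,\dots,\tilde{x}_n+z_n)=c\cup M$, where $c=\bigcup_{i\in C}E(\tilde{x}_i+z_i)$ and $M=\bigcup_{i\notin C}E(\tilde{x}_i+z_i)$ live on disjoint index sets. For a fixed value $c$ of the colluding block and any set $S$ of multisets of $mn$ elements, put $S_c=\{\, M' : c\cup M'\in S \,\}$. Then
\[
  \Pr[\, E(\tilde{x}_1+z_1,\dots,\tilde{x}_n+z_n)\in S \mid E(\tilde{x}_i+z_i) ~ \forall i\in C \,] \;=\; \Pr[\, \textstyle\bigcup_{i\notin C}E(\tilde{x}_i+z_i)\in S_c \,],
\]
and the same identity holds after replacing $\tilde{x}_j$ by $\tilde{x}'_j$, with the \emph{same} $c$ and $S_c$ (the conditioning event is unchanged since $\tilde{x}'_i=\tilde{x}_i$ for $i\in C$). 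The right-hand side is exactly a quantity handled by Lemma~\ref{lemma:subset_DP_single_user_changes}: take its ``$n$'' to be $n-|C|$, its users to be the non-colluding ones (re-indexed so that $j$ occupies the first coordinate), and its ``$S$'' to be $S_c$. This bounds it by $\tfrac{1+\gamma}{1-\gamma}\cdot\tfrac{p^{-k}}{1-e^{-q(n-|C|)}}$ times the corresponding probability with $\tilde{x}_j$ replaced by $\tilde{x}'_j$, plus $\eta+e^{-q(n-|C|)}$; re-folding the block $c$ then gives exactly~\eqref{eq:resilient_single_user_bound}. The hypothesis $|C|\le 0.9n$ is used only to guarantee that a constant fraction $n-|C|\ge 0.1n$ of the users are honest — in particular $n-|C|\ge 2$ in the large-$n$ regime of interest, which is what Lemma~\ref{lemma:subset_DP_single_user_changes} (whose proof selects an auxiliary honest partner $j_2\ne j$) requires — and to keep the slack $e^{-q(n-|C|)}$ comparable to the non-resilient $e^{-qn}$.

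The only real obstacle is the bookkeeping behind the displayed factorization: one must set up the event ``$E(\tilde{x}_i+z_i)$ for all $i\in C$'' and the induced family $S_c$ carefully enough, under the multiset-union conventions, that the identity is literally true; after that the result is a black-box consequence of Lemma~\ref{lemma:subset_DP_single_user_changes}. Alternatively, one can re-run the proof of Lemma~\ref{lemma:subset_DP_single_user_changes} essentially verbatim, observing that its ``at least one noisy user'' event and all of its invocations of Lemma~\ref{lemma:swap_dp} only ever need the modified user $j$ and its noisy partner $j_2$ to lie outside $C$ (available once $n-|C|\ge 2$), while the colluding coordinates — having unchanged inputs — are simply carried along in the marginalization step of Lemma~\ref{lemma:swap_dp}'s proof; the small-$n$ corner cases (where only $j$ is honest) are handled trivially since then the ``noisy'' event has probability at most $q$ and is absorbed into the $e^{-q(n-|C|)}$ slack. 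Finally, Lemma~\ref{lemma:resilient_privacy_sum_preserving} follows from the same reduction, with Lemma~\ref{lemma:sum_swap_series} in place of Lemma~\ref{lemma:subset_DP_single_user_changes}.
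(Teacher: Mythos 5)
Your proof is correct and takes essentially the same route as the paper: the paper also conditions out the colluding users' messages (invoking the marginalization idea behind Lemma~\ref{lemma:swap_dp}) and then re-runs the argument of Lemma~\ref{lemma:subset_DP_single_user_changes} on the honest subset $[n]\setminus C$, redefining the ``at least one noisy user'' event $A$ over that subset, which is where $e^{-qn}$ becomes $e^{-q(n-|C|)}$. Your $S_c$ construction is a somewhat cleaner way to make the conditioning step precise --- it lets you cite Lemma~\ref{lemma:subset_DP_single_user_changes} as a black box with $n-|C|$ users rather than re-deriving its chain of inequalities --- but the underlying content (independence of honest and colluding encoder randomness, reduction to the honest subset, then the single-user-change lemma with the smaller population) is identical.
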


\splitter{
\begin{proof}[Proof of Lemma~\ref{lemma:resilient_privacy_sum_preserving}]
We start by applying Lemma~\ref{lemma:swap_dp} in order to condition on the messages of all the colluding users. This allows us to reduce to the case where the messages of all users in $C$ are \emph{fixed} and where we would like to prove the differential privacy guarantee with respect to single-user changes on the inputs of the smaller subset $[n] \setminus C$ of (non-colluding) users. The rest of the proof follows along the same lines as the proof of Lemma~\ref{lemma:sum_swap_series} with any modification in the bounds.
\end{proof}

\begin{proof}[Proof of Lemma~\ref{lemma:resilient_privacy_single_user}]
We start by applying Lemma~\ref{lemma:swap_dp} in order to condition on the messages of all colluding users. This allows us to reduce to the case where the messages of all users in $C$ are \emph{fixed} and where we would like to prove differential privacy guarantees with respect to sum-preserving changes on the smaller subset $[n] \setminus C$ of (non-colluding) users. The rest of the proof follows along the same lines as the proof of Lemma~\ref{lemma:subset_DP_single_user_changes}. Note that that the tail probability term $e^{-qn}$ in~(\ref{eq:subset_upper_bound_single_user_change}) is replaced by the slightly larger quantity $e^{-q(n-|C|)}$ in~(\ref{eq:resilient_single_user_bound}) as the event $A$ in the proof of Lemma~\ref{lemma:subset_DP_single_user_changes} has now to be defined over the smaller set $[n] \setminus C$ of non-colluding users (and consequently the bounds in~(\ref{eq:lower_bound_at_least_one_noisy}) and~(\ref{eq:tail_at_least_one_noisy_2}) are modified similarly).
\end{proof}
}
{The proofs of Lemmas~\ref{lemma:resilient_privacy_sum_preserving} and~\ref{lemma:resilient_privacy_single_user} are given in the supplementary material.}

\section{Conclusion and Open Problems}\label{sec:conclusion}
Our work provides further evidence that the shuffled model of differential privacy~\cite{bittau17,cheu19} is a fertile ''middle ground'' between local differential privacy and general multi-party computations, combining the scalability of local DP with the high utility and privacy of MPC.
This makes it more feasible to design scalable machine learning systems in a federated setting.

The main open problem that we leave is how many messages $m$ are necessary to achieve differential privacy without a cost of $n^{\Omega(1)}$ in error or communication.
It is shown in~\cite{balle19} that $m=1$ is not enough, but we cannot rule out that $m = O(\log_n k)$ suffices to achieve error $1/k$ under sum-preserving changes, using our protocol unchanged.
Another issue is that our current protocol fails to provide privacy with some small probability, for example if all random numbers chosen by the encoder happen to be zero.
The question is whether the error probability can be eliminated by somehow changing the protocol, achieving pure differential privacy.

\bibliographystyle{abbrv}
\bibliography{main}

\newpage
\end{document}